\newtheorem{theorem}{Theorem}
\newtheorem{lemma}{Lemma}
\newtheorem{claim}{Claim}
\newtheorem{definition}{Definition}
\renewcommand{\phi}{\varphi}
\renewcommand{\epsilon}{\varepsilon}
\newenvironment{proof}{\noindent{\sc Proof.}}{\hfill $\boxtimes\hspace{2mm}$\linebreak}
\newcommand{\qed}{\hfill $\boxtimes\hspace{1mm}$}
\newenvironment{proof-of-claim}{\noindent{\sc Proof of Claim.}}{\hfill $\boxtimes\hspace{2mm}$\vspace{0mm}\linebreak}
\begin{document}

\title{As Long as there are Bullets in our Guns:\\ Resource-Constrained Strategies}

\title{Budget-Constrained Coalitions with Discounting}

\title{Budget-Constrained Coalition Strategies with Discounting}


\author{
Lia Bozzone$^1$
\And 
Pavel Naumov$^2$
\affiliations
$^1$Vassar College\\
$^2$King's College\\
\emails
lbbozzone@gmail.com,
pgn2@cornell.edu
}

\maketitle

\begin{abstract}
Discounting future costs and rewards is a common practice in accounting, game theory, and machine learning. In spite of this, existing logics for reasoning about strategies with cost and resource constraints do not account for discounting. The paper proposes a sound and complete logical system for reasoning about budget-constrained strategic abilities that incorporates discounting into its semantics. 
\end{abstract}

\section{Introduction}\label{introduction section}




Several logical systems for reasoning about agent and coalition power in game-like settings have been previously proposed. Among them are coalition logics~\cite{p01illc,p02}, ATL~\cite{ahk02}, ATEL~\cite{vw03sl}, ATLES~\cite{whw07tark}, know-how logics~\cite{aa16jlc,w17synthese,nt17aamas,fhlw17ijcai,nt18ai,nt18aaai,nt19ai}, and STIT~\cite{bp90krdr,x95jlc,h01}. Some of these systems have been extended to incorporate resources and costs of actions~\cite{alnr11jlc,cn20ai,alnr11jlc,cn17ijcai,al18aamas,adl16ijcai,mnp11entcs,alnr17jcss,alnrm15aamas}. Even in the case of multi-step actions, these systems treat current and future costs equally.

At the same time, in game theory, accounting, and machine learning, costs of multi-step transitions are often discounted to reflect the fact that future costs and earnings have lesser present values. Thus, there is a gap between the way resources and costs currently are treated in logic and the way they are accounted for in other fields. To address this gap, in this paper we propose a sound and complete logic of coalition power whose semantics incorporates discounting. Although we formulate our work in terms of cost, it could be applied to any other resource measured in real numbers. It can also be straightforwardly extended to vectors of real numbers to incorporate multiple resources.

As an example, consider a single-player game depicted in Figure~\ref{intro-two-states figure}. This game has four game states $w$, $u$, $v$, and $s$ and a single terminal state $t$. Propositional variable $p$ is true in game states $w$, $u$, and $v$ and is false in game state $s$. We assume that the values of propositional variables are not defined in the terminal state
$t$. The agent $a$ has multiple actions in each game state. These actions are depicted in Figure~\ref{intro-two-states figure} using directed edges. The cost of each action to agent $a$ is shown as a label on the directed edge. For instance, the directed edge from state $w$ to state $u$ with label $2$ means that the agent $a$ has an action with cost $2$ to transition the game from state $w$ to state $u$. Transitioning to the terminal state $t$ represents the termination of the game.

\begin{figure}[ht]
\begin{center}
\vspace{0mm}
\scalebox{0.5}{\includegraphics{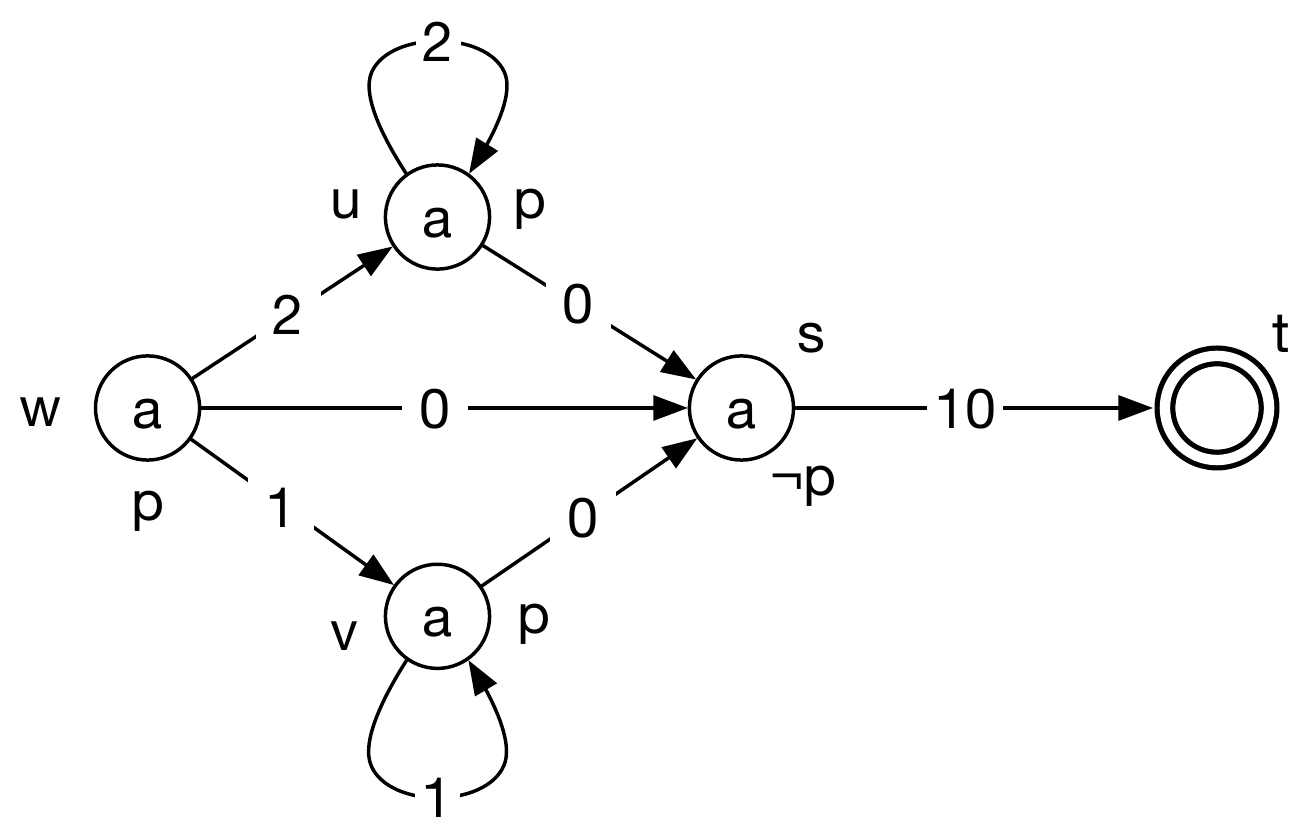}}
\caption{A game.}\label{intro-two-states figure}
\vspace{0mm}
\end{center}
\end{figure}

Note that in state $w$ the agent has two strategies to maintain condition $p$ indefinitely. The first strategy consists of transitioning the game to state $u$ at cost $2$ and then repeatedly applying the action with cost $2$ to keep the game in state $u$. Without discounting, the cost of this strategy is $2+2+2+\dots =+\infty$. The agent also has another strategy to maintain condition $p$ that consists in transitioning to state $v$ at cost $1$ and then keeping the game in state $v$ with recurrent cost $1$. Intuitively, the second strategy is less expensive than the first because each step costs half as much. However, formally, the cost of the second strategy without discounting is the same as the first one: $1+1+1+\dots =+\infty$. 

The problem that we observe here is not specific to costs of strategies. A similar situation also appears in repetitive games, accounting, and reinforcement machine learning algorithms based on Markov decision processes. The solution commonly used to resolve this problem is discounting. It consists of counting the cost on the first step at nominal value, the cost on the second step with a discount factor $\gamma\in (0,1)$, the cost on the third step with discount factor $\gamma^2$, etc. With discounting, the total cost of our first strategy is 
$$
2+2\gamma+2\gamma^2+2\gamma^3+\dots =\dfrac{2}{1-\gamma},
$$
while the cost of our second strategy is
$$
1+1\gamma+1\gamma^2+1\gamma^3+\dots =\dfrac{1}{1-\gamma}.
$$
Since $\frac{1}{1-\gamma}<\frac{2}{1-\gamma}$, we can say that with discounting the second strategy is less expensive than the first. In the rest of this paper, we assume a fixed discount factor $\gamma\in (0,1)$.

\section{Outline}

The rest of this paper is structured as follows: In the next section, we introduce a class of games that will later be used to define the semantics our logical system. Section~\ref{syntax section} defines the language of our system. Section~\ref{semantics section} gives the discounting-based semantics of this language. Section~\ref{perfect recall section} shows that the properties of strategies with discounting depend on whether we consider strategies with or without perfect recall. Section~\ref{axioms section} lists and discusses the axioms of our logical system for the strategies with perfect recall. Section~\ref{completeness section} proves the completeness of our system. Section~\ref{conclusion section}  concludes. Additionally, the proof of soundness can be found in the appendix.

\section{Game Definition}

Throughout the paper, we assume a fixed nonempty set of propositional variables and a fixed set of agents $\mathcal{A}$. By a coalition we mean any subset of $\mathcal{A}$. By $X^\mathcal{A}$ we mean the set of all functions from set $\mathcal{A}$ to a set $X$.

The class of games that we consider is specified below. 

\begin{definition}\label{game}
A game is a tuple $(W,t,\Delta,\epsilon,M,\pi)$, where
\begin{enumerate}
    \item $W$ is a set of {\bf\em game states},
    
    \item $t\notin W$ is a {\bf\em terminal} state, by $W^+$ we denote the set of all states $W\cup\{t\}$,
    
    \item $\Delta$ is an arbitrary set called {\bf\em domain of actions},
    \item $\epsilon\in\Delta$ is a {\bf\em zero-cost action},
    \item $M\subseteq W\times \Delta^{\mathcal{A}}\times [0,\infty)^\mathcal{A}\times W^+$
    is a relation called {\bf\em mechanism}, such that 
    \begin{enumerate}
        \item for each tuple $(w,\delta,u,w')\in M$ and each agent $a\in\mathcal{A}$, if $\delta(a)=\epsilon$, then $u(a)=0$,
        \item for each state $w\in W$ and each {\bf\em complete action profile} $\delta\in \Delta^\mathcal{A}$, there is a function $u\in [0,+\infty)^\mathcal{A}$ and a state $w'\in W^+$ such that $(w,\delta,u,w')\in M$,
    \end{enumerate}
    
    \item $\pi$ is a {\bf\em valuation function} that maps propositional variables into subsets of $W$.
\end{enumerate}
\end{definition}
Intuitively, mechanism is a set of all quadruples $(w,\delta,u,v)$ such that the game might transition from state $w$ to state $v$ under action profile $\delta$ at costs to the individual agents specified by function $u$. 

The defined above games are similar to resource-bounded action frames, which are the semantics of Resource-Bounded Coalition Logic (RBCL) ~\cite{alnr11jlc}. In particular, both of them have a zero-cost action.  

However, there are several differences between these two classes of models. Unlike RBCL frames, our games have only one resource that we call ``cost''. We do this for the sake of presentation simplicity. Multiple resources could be incorporated into our system without any significant changes to the results in this paper. RBCL allows only non-negative {\em integer} resource requirements, while costs in our games are non-negative {\em real} numbers. RBCL assigns a unique cost to each {\em action}, while our games assign cost to each {\em transition}. As a result, the cost to the agent in our setting depends not only on the action of the agent but also on the actions of the other agents. This is similar to how the utility function of an agent in game theory is a function of the complete action profile, not just of the action of that agent. We achieve this by including the cost of the transition for each agent as the third component of a tuple from the mechanism relation. 

Furthermore, the RBCL frames are deterministic while our games are not deterministic because we represent mechanism as a relation, not a function. Unlike RBCL frames, our games can be terminated by the agents. 
In order to make our semantics more general, the games are terminated through a transition to a terminal state. Such transitions allow the agents to be charged upon the termination of a game. 
This ability is significant for our proof of completeness.

\begin{figure}[ht]
\begin{center}
\vspace{0mm}
\scalebox{0.5}{\includegraphics{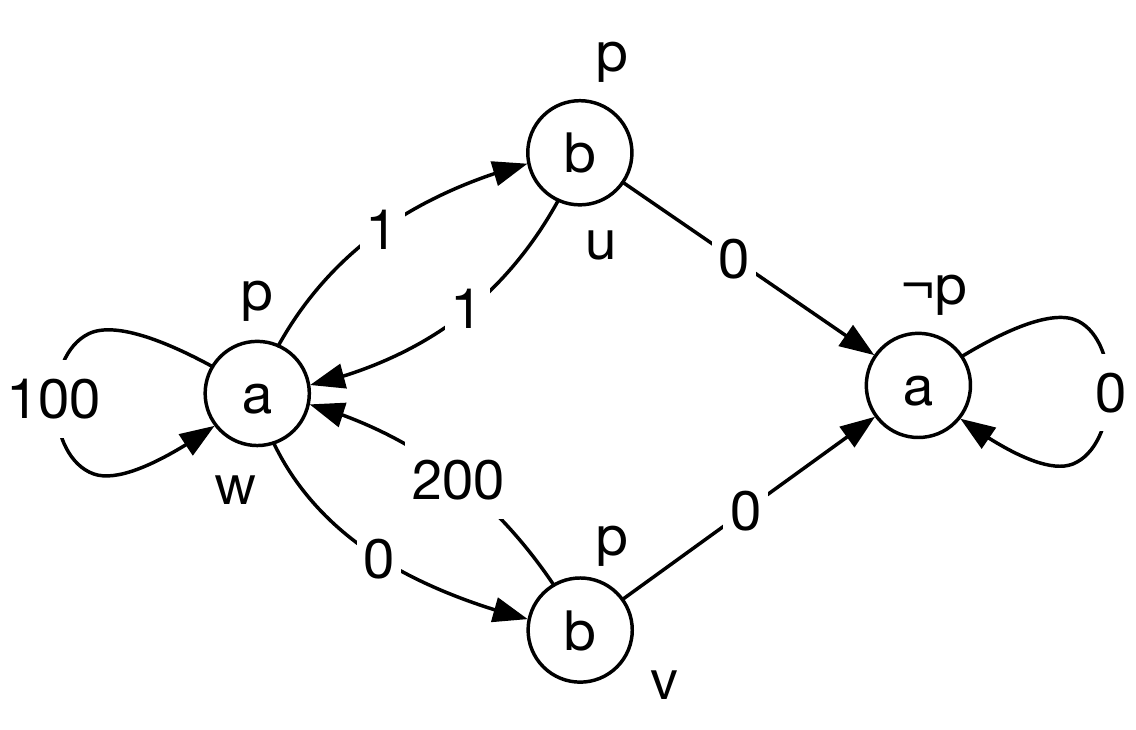}}
\caption{A game. The unreachable terminal state $t$ is not shown in the diagram.}\label{intro-coalition-helps figure}
\vspace{0mm}
\end{center}
\end{figure}

The game depicted in Figure~\ref{intro-two-states figure} has only one player. Figure~\ref{intro-coalition-helps figure} depicts a two-player game. Note that our general notion of the game captured in Definition~\ref{game} allows each agent to influence the outcome of each transition and imposes a cost for each transition on each agent. For the sake of simplicity, the game depicted in Figure~\ref{intro-coalition-helps figure} designates a single ``dictator'' agent in each state. For example, in state $w$, the dictator is agent $a$. The dictator is solely responsible for the choice of the next state and bears all the costs associated with the transition. 
In the diagram, the dictator is specified inside each state's circle. Note that in state $w$ agent $a$ has a strategy to maintain condition $p$ at cost $100+100\gamma+100\gamma^2+\dots=\frac{100}{1-\gamma}$ to herself and cost $0+0\gamma+0\gamma^2+\dots=0$ to agent $b$.

\begin{definition}\label{play}
A play in game $(W,t,\Delta,\epsilon,M,\pi)$ is a finite sequence $w_0,\delta_0,u_0,w_1,\dots,\delta_{n-1},u_{n-1},w_n$ such that
\begin{enumerate}
    \item $w_i\in W$ for  $0\le i< n$ and $w_n\in W^+$,
    \item $\delta_i\in \Delta^\mathcal{A}$, where  $0\le i < n$,
    \item $u_i\in [0,\infty)^\mathcal{A}$ is a {\bf\em cost function}, where $0\le i < n$,
    \item $(w_{i-1},\delta_{i-1},u_{i-1},w_i)\in M$, where  $1\le i\le n$.
\end{enumerate}
\end{definition}
The set of all plays of a given game is denoted by $Play$.

\section{Syntax}\label{syntax section}

The language $\Phi$ of our system is defined by the grammar
$$
\phi:= p \;|\; \neg\phi\;|\;\phi\to\phi\;|\;[C]_x\phi, 
$$
where $p$ is a propositional variable, $C$ is a coalition, and $x$ is a ``constraint'' function from set $C$ to $[0,+\infty)$. We read $[C]_x\phi$ as ``coalition $C$ has  a strategy to maintain condition $\phi$ at individual cost no more than $x(a)$ to each member $a\in C$''. 

If $C$ is a coalition $\{a_1,\dots,a_n\}$ and $x$ is a function from set $C$ to $[0,+\infty)$ such that $x(a_i)=x_i$ for each $i\le n$, then we will use shorthand notation $[a_1,\dots,a_n]_{x_1,\dots,x_n}\phi$ to refer to formula $[C]_x\phi$.

\begin{definition}\label{divide}
For any real $\mu>0$ and any formula $\phi\in \Phi$, formula $\phi/\mu$ is defined recursively as follows:
\begin{enumerate}
    \item $p/\mu \equiv p$, for any propositional variable $p$,
    \item $(\neg\phi)/\mu \equiv\neg(\phi/\mu)$,
    \item $(\phi\to\psi)/\mu\equiv(\phi/\mu)\to(\psi/\mu)$,
    \item $([C]_x\phi)/\mu\equiv[C]_{x/\mu}(\phi/\mu)$.
\end{enumerate}
\end{definition}
For example, $([a,b]_{4,6}\neg[b,c]_{8,2}\,p)/2=[a,b]_{2,3}\neg[b,c]_{4,1}\,p$.

\section{Semantics}\label{semantics section}

In this section we define the semantics of our logical system.

\begin{definition}
An action profile of a coalition $C$ is a function from set $C$ to set $\Delta$. 
\end{definition}

\begin{definition}\label{strategy}
A strategy of a coalition $C$ is a function from set $C\times Play$ to set $\Delta$. 
\end{definition}

Note that each strategy takes into account not just the current state but the whole play. Thus, the strategies that we consider are {\em perfect recall} strategies. We will discuss this in detail in the next section.

\begin{definition}\label{play satisfies strategy}
A play $w_0,\delta_0,u_0,w_1,\dots,u_{n-1},w_n\in Play$ satisfies strategy $s$ of a coalition $C$ if for each $i$ such that $0\leq i < n$ and each agent $a\in C$,
$$\delta_i(a)=s(a,(w_0,\delta_0,u_0,w_1,\dots,u_{i-1},w_{i})).$$
\end{definition}

For any functions $x$ and $y$, we write $x\le_C y$ if $x(a)\le y(a)$ for each $a\in C$. We define notation $x=_C y$ similarly.

\begin{definition}\label{sat}
For each formula $\phi\in\Phi$ and each state $w\in W$ of a game $(W,t,\Delta,\epsilon,M,\pi)$, satisfaction relation $w\Vdash\phi$ is defined recursively as follows:
\begin{enumerate}
    \item $w\Vdash p$, if $w\in \pi(p)$,
    \item $w\Vdash\neg\phi$, if $w\nVdash\phi$,
    \item $w\Vdash\phi\to\psi$, if $w\nVdash\phi$ or $w\Vdash\psi$,
    \item $w\Vdash [C]_x\phi$ if there is a strategy $s$ of coalition $C$ such that for any play $w_0,\delta_0,u_0,w_1,\dots,u_{n-1},w_n\in Play$ that satisfies strategy $s$, if $w=w_0$, then 
    \begin{enumerate}
        \item $\sum_{i = 0}^{n-1}u_i\gamma^i\le_C x$ and
        \item if $w_n\neq t$, then $w_n\Vdash \phi/\gamma^n$.
    \end{enumerate}
\end{enumerate}
\end{definition}

To understand why item 4(b) of the above definition uses formula $\phi/\gamma^n$ instead of formula $\phi$, let us consider an example of a formula $\phi\equiv [D]_y\psi$.   Note that formula $[C]_x[D]_y\psi$ states that coalition $C$  can maintain at cost $x$ the ability of coalition $D$ to maintain $\psi$ at cost $y$. Consider the hypothetical case where $C$, at cost $x$ to $C$, will be  maintaining this ability of $D$ for, say, $10$ transitions. The formula $[C]_x[D]_y\psi$ states that after $10$ moves coalition $D$ should be able to take over and maintain condition $\psi$ at cost $y$ to $D$. Given that in our setting the costs are discounted, an important question is whether $y$ is measured in {\em today's} money or {\em future} money. Note that $y$ in future money is $y\gamma^{10}$ in today's money. On the other hand, $y$ in today's money is $y/\gamma^{10}$ in future money. In this paper we decided to measure all costs in today's money. Thus, cost $y$ in $[C]_x[D]_y\psi$ refers to costs in today's money (in state $w_0$ of Definition~\ref{sat}). In future money (in state $w_n$), the same cost is $y/\gamma^n$.
As a result, item 4(b) of Definition~\ref{sat} uses formula $\phi/\gamma^n$ instead of just $\phi$.

Consider again the game depicted in Figure~\ref{intro-coalition-helps figure}. As discussed earlier, in state $w$, single-agent coalition $\{a\}$ has a strategy to maintain condition $p$ by looping in state $w$ at recurrent cost $100$. The total cost of this strategy is $100+100\gamma+100\gamma^2+\dots=\frac{100}{1-\gamma}$. Thus, $w\Vdash [a]_{100/(1-\gamma)}\,p$. In the same game, single-agent coalition $\{b\}$ also has a strategy to maintain condition $p$. The strategy consists in pushing the game back to state $w$ each time when agent $a$ transitions the game out of state $w$ either into state $u$ or state $v$. The cost of the ``pushing back'' action from state $u$ and $v$ is $1$ and $200$ respectively. Hence, the total cost to agent $b$ could be no more than $0+200\gamma+0+200\gamma^3+0+\dots = 200\gamma/(1-\gamma^2)$. Then, 
$w\Vdash [b]_{200\gamma/(1-\gamma^2)}\,p$. Finally, note that if agents $a$ and $b$ decide to cooperate, then maintaining condition $p$ becomes significantly less expensive for both of them because they can alternate the state of the game between states $w$ and $u$. The total cost of the joint strategy to agent $a$ is  $1+0+\gamma^2+0+\gamma^3+\dots=\frac{1}{1-\gamma^2}$ and to agent $b$ is $0+\gamma+0+\gamma^3+\dots=\frac{\gamma}{1-\gamma^2}$. Therefore, 
$w\Vdash [a,b]_{1/(1-\gamma^2),\gamma/(1-\gamma^2)}\,p$.

\section{Perfect Recall Assumption}\label{perfect recall section}

Definition~\ref{strategy} specifies a strategy of a coalition as a function that assigns an action to each member of a coalition based on a play of the game. In other words, any strategy has access to the whole history of the game rather than just to the current state. Such strategies are often referred to as {\em perfect recall strategies}. As the next example shows,  perfect recall strategies might have different discounted costs than memoryless strategies for the same condition to maintain in the same game. 

\begin{figure}[ht]
\begin{center}
\vspace{0mm}
\scalebox{0.5}{\includegraphics{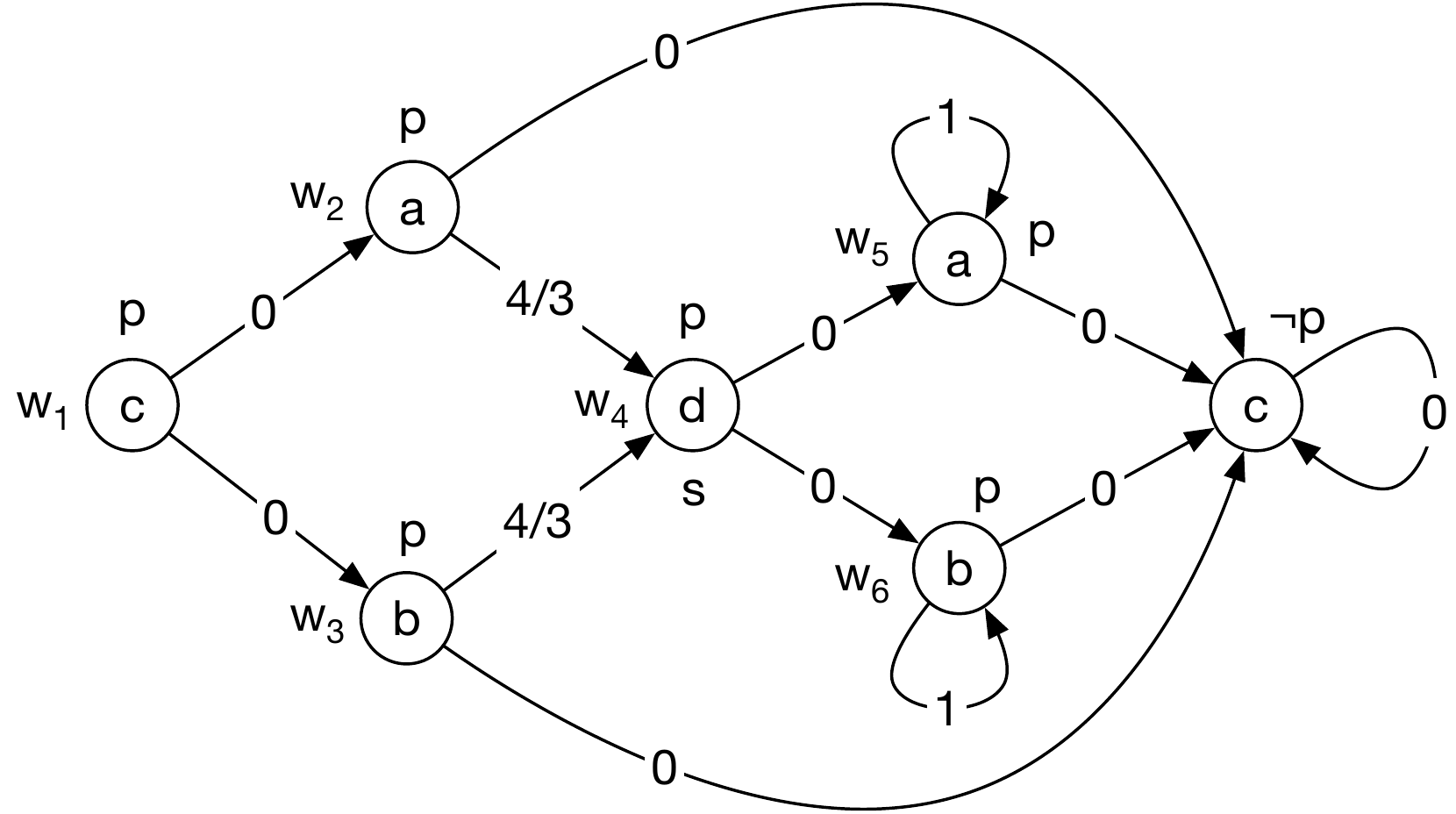}}
\caption{A game. The unreachable terminal state $t$ is not shown in the diagram.}\label{intro-memory-example figure}
\vspace{0mm}
\end{center}
\end{figure}

Consider the game depicted in Figure~\ref{intro-memory-example figure} and assume, for this example only, that $\gamma=2/3$. 
Suppose that coalition $\{a,b,d\}$ wants to maintain condition $p$ starting from state $w_1$. 

Since agent $c$ is not a member of the coalition, the coalition has no control whether the system transitions from state $w_1$ to state $w_2$ or $w_3$. Once the system is either in state $w_2$ or state $w_3$, in order to maintain the condition $p$, agent $a$ or agent $b$, respectively, will have to transition the game to state $w_4$ at cost $\frac{4}{3}\gamma=\frac{4}{3}\cdot\frac{2}{3}=\frac{8}{9}$ to the agent. In state $w_4$, the coalition faces a choice between (i) transitioning game into state $w_5$ in which agent $a$ encounters cost 
$$
1\gamma^3+1\gamma^4+\dots=\dfrac{\gamma^3}{1-\gamma}=\frac{(2/3)^3}{1/3}=\dfrac{8}{9}
$$
to maintain $p$
and (ii) transitioning game into state $w_6$ in which agent $b$ encounters the same cost $1\gamma^3+1\gamma^4+\dots=\frac{8}{9}$ to maintain condition $p$.

If agent $d$ has a perfect recall, then she can balance the costs between agents $a$ and $b$ by transitioning to state $w_6$ if the game transitioned to $w_4$ from state $w_2$ and  transitioning to state $w_5$ if the game transitioned to $w_4$ from state $w_3$. This way, agents $a$ and $b$ encounter the same total costs $8/9$:
$$w_0\Vdash [a,b,d]_{8/9,8/9,0}\, p.$$
At the same time, if agent $d$ does not have memory about the previous state of the game, then either agent $a$ or $b$ might encounter a total cost as high as $8/9+8/9=16/9$ while executing the coalition strategy to maintain condition $p$:
$$w_0\Vdash [a,b,d]_{16/9,16/9,0}\, p.$$

In this paper, we consider discounted costs under perfect recall assumption for all agents. 








\section{Axioms}\label{axioms section}

In this section, we introduce a logical system describing the properties of coalition power modality $[C]_x\phi$. In addition to propositional tautologies in language $\Phi$, the system contains the following axioms:

\begin{enumerate}
    \item Reflexivity: $[C]_x\phi\to\phi$,
    \item Cooperation: if $C\cap D=\varnothing$, then\\ $[C]_x(\phi\to\psi)\to([D]_y\phi\to[C\cup D]_{x\cup y}\psi)$,
    \item Monotonicity: $[C]_x\phi\to [C]_y\phi$, where $x\le_C y$,
    \item Transitivity: $[C]_x\phi\to[C]_x[C]_x\phi$.
\end{enumerate}
Recall that the value of discount factor $\gamma$ has been fixed at the end of Section~\ref{introduction section}. It is worth noting that this factor does not appear explicitly in any of the above axioms.

The Reflexivity axiom says that if coalition $C$ can maintain condition $\phi$ at discounted cost $x$ starting from the current state, then condition $\phi$ must be true in the current state. The Cooperation axiom states that if coalitions $C$ and $D$ are disjoint, coalition $C$ can maintain condition $\phi\to\psi$ at cost $x$, and $D$ can maintain condition $\phi$ at cost $y$, then together they can maintain condition $\psi$ at cost $x\cup y$. Here, by $x\cup y$ we mean the union of two functions with disjoint domains. The Monotonicity axiom states that if a coalition can maintain condition at some cost, then it can maintain the same condition at any larger cost.  

The assumption of the Transitivity axiom states that coalition $C$ has a strategy, say $s$, to maintain condition $\phi$ at cost $x$ in perpetuity. The conclusion states that the same coalition can, at cost $x$, maintain its own ability to maintain $\phi$ at cost $x$. To achieve this, coalition $C$ can use the same strategy $s$. Indeed, assume that coalition $C$ used strategy $s$ for some number of steps at cost $x'\le_C x$ in today's money. Thus, it should be able to keep using it at cost $x-x'\le_C x$ in today's money to maintain $\phi$. Note that it is crucial for this argument that all costs are computed in today's money. Furthermore, the Transitivity axiom is not sound if the cost in the internal modality is measured in future money. A non-trivial proof of soundness of the Transitivity axiom as well as the proofs soundness of all other axioms can be found in the appendix. 

We write $\vdash\phi$ and say that formula $\phi$ is a {\em theorem} if $\phi$ is derivable from the above axioms using the Modus Ponens and the Necessitation inference rules:
$$
\dfrac{\phi,\phi\to\psi}{\psi}
\hspace{15mm}
\dfrac{\phi}{[C]_x\phi}.
$$
In addition to unary relation $\vdash\phi$, we also consider binary relation $X\vdash\phi$. Let $X\vdash\phi$ if formula $\phi$ is provable from {\em the theorems} of our logical system and the set of additional assumptions $X$ using {\em only} the Modus Ponens inference rule. The proofs of the next three auxiliary lemmas can be found in the appendix.

\begin{lemma}\label{superdistributivity lemma} 
If $\phi_1,\dots,\phi_n\vdash\psi$ and sets $C_1$,\dots,$C_n$ are pairwise disjoint, then
$$[C_1]_{x_1}\phi_1,\dots,[C_n]_{x_n}\phi_n\vdash[C_1\cup\dots\cup C_n]_{x_1\cup\dots\cup x_n}\psi.$$
\end{lemma}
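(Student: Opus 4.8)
The plan is to reduce the modal claim to a purely propositional derivation and then transport it through the modalities using Necessitation together with the Cooperation axiom. The first observation is that the relation $X\vdash\phi$ is defined to permit \emph{only} Modus Ponens, so it satisfies the classical Deduction Theorem. Applying the Deduction Theorem $n$ times to the hypothesis $\phi_1,\dots,\phi_n\vdash\psi$ yields the theorem
\[
\vdash\ \phi_1\to\bigl(\phi_2\to\cdots\to(\phi_n\to\psi)\cdots\bigr).
\]
Write $\chi_k$ for the ``tail'' $\phi_{k+1}\to\cdots\to(\phi_n\to\psi)$, so that $\chi_0$ is exactly this theorem and $\chi_n=\psi$; note that $\chi_{k-1}=(\phi_k\to\chi_k)$.

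Next I would move into the modal layer and strip off the antecedents $\phi_1,\dots,\phi_n$ one at a time. Necessitation applied to $\chi_0$ with the empty coalition gives $\vdash[\varnothing]_{x_0}\chi_0$, where $x_0$ is the empty constraint function. I then argue by induction on $k$, maintaining the invariant
\[
[C_1]_{x_1}\phi_1,\dots,[C_n]_{x_n}\phi_n\ \vdash\ [C_1\cup\cdots\cup C_k]_{x_1\cup\cdots\cup x_k}\,\chi_k .
\]
The base case $k=0$ is just $\vdash[\varnothing]_{x_0}\chi_0$, reading the empty union of coalitions as $\varnothing$ and the empty union of constraints as $x_0$. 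For the inductive step, since $\chi_{k-1}=(\phi_k\to\chi_k)$, the induction hypothesis supplies $[C_1\cup\cdots\cup C_{k-1}]_{x_1\cup\cdots\cup x_{k-1}}(\phi_k\to\chi_k)$. As the $C_i$ are pairwise disjoint, the accumulated coalition $C_1\cup\cdots\cup C_{k-1}$ is disjoint from $C_k$, so the Cooperation axiom instantiates to
\[
[C_1\cup\cdots\cup C_{k-1}]_{x_1\cup\cdots\cup x_{k-1}}(\phi_k\to\chi_k)\to\bigl([C_k]_{x_k}\phi_k\to[C_1\cup\cdots\cup C_k]_{x_1\cup\cdots\cup x_k}\chi_k\bigr).
\]
Two applications of Modus Ponens, using the induction hypothesis and the assumption $[C_k]_{x_k}\phi_k$, yield the invariant for $k$. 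At $k=n$ the invariant is precisely the desired conclusion, because $\chi_n=\psi$.

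The only real subtlety is the bookkeeping at the ends of the induction. The reason for kicking the process off with the \emph{empty} coalition rather than with $[C_1]_{x_1}\chi_0$ is that the first antecedent to be removed is $\phi_1$, which is maintained by $C_1$ itself; since Cooperation requires disjoint coalitions and $C_1\cap C_1\neq\varnothing$, one cannot peel $\phi_1$ off a $[C_1]$-necessitation. Starting from $[\varnothing]_{x_0}\chi_0$ sidesteps this, because $\varnothing$ is disjoint from every $C_k$ and absorbs harmlessly into the accumulated union ($\varnothing\cup C=C$ and $x_0\cup x=x$). The remaining checks — that each accumulated coalition is disjoint from the next (immediate from pairwise disjointness) and that the Deduction Theorem is indeed available for the MP-only relation $\vdash$ — are routine, so I expect the main conceptual work to lie entirely in choosing the order of peeling and in the correct handling of the empty-coalition base case.
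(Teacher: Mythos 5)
Your proof is correct and is essentially the paper's own argument: both derive $\vdash\phi_1\to(\cdots\to(\phi_n\to\psi))$ via the Deduction Theorem (valid since $\vdash$ is MP-only over a system containing all tautologies), apply Necessitation with the empty coalition, and then repeatedly use Cooperation plus Modus Ponens to peel off one antecedent at a time while accumulating disjoint coalitions and constraint functions. Your explicit induction invariant and the remark on why one must start from $[\varnothing]$ rather than $[C_1]$ merely make precise what the paper phrases as ``repeating the last two steps $n-2$ times.''
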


\begin{lemma}\label{multiplication lemma} 
If $\phi_1/\gamma,\dots,\phi_n/\gamma\vdash\psi/\gamma$, then
$\phi_1,\dots,\phi_n\vdash\psi$.
\end{lemma}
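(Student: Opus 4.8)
The plan is to exploit the fact that the division operation $\phi\mapsto\phi/\mu$ is invertible. A routine structural induction on $\phi$ (using the corresponding pointwise facts $(x/\mu)/\nu = x/(\mu\nu)$ and $x/1 = x$ on constraint functions) gives $(\phi/\mu)/\nu = \phi/(\mu\nu)$ and $\phi/1 = \phi$. Hence $\phi\mapsto\phi/\gamma$ is a bijection on $\Phi$ whose inverse is $\phi\mapsto\phi/(1/\gamma)$; since $\gamma\in(0,1)$ we have $1/\gamma>0$, so this latter map is a legitimate instance of Definition~\ref{divide}. The whole lemma then reduces to transforming a given Modus-Ponens derivation of $\psi/\gamma$ into one of $\psi$ by applying division by $1/\gamma$ to every formula in it.

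The key auxiliary step I would isolate first is the claim: for every real $\mu>0$ and every formula $\chi$, if $\vdash\chi$ then $\vdash\chi/\mu$. I would prove this by induction on the derivation of $\chi$. The two inference rules are immediate once one observes that division commutes with implication (item~3 of Definition~\ref{divide}), so Modus Ponens is preserved, and that $([C]_x\chi)/\mu = [C]_{x/\mu}(\chi/\mu)$, so Necessitation applied to $\chi$ with constraint $x$ corresponds to Necessitation applied to $\chi/\mu$ with constraint $x/\mu$. The base cases amount to checking that dividing each axiom instance by $\mu$ is again an axiom instance: Reflexivity and Transitivity follow directly from item~4 of Definition~\ref{divide}; Cooperation uses that $(x\cup y)/\mu = (x/\mu)\cup(y/\mu)$ and that the side condition $C\cap D=\varnothing$ is untouched by division; Monotonicity uses that $\mu>0$ guarantees $x\le_C y$ implies $x/\mu\le_C y/\mu$; and for a propositional tautology one notes that $\cdot/\mu$ acts as a uniform substitution on the maximal non-Boolean subformulas, so tautologousness is preserved.

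With this claim in hand, I would finish as follows. Suppose $\phi_1/\gamma,\dots,\phi_n/\gamma\vdash\psi/\gamma$, witnessed by a sequence $\chi_1,\dots,\chi_m=\psi/\gamma$ in which each $\chi_k$ is a theorem, is one of the assumptions $\phi_i/\gamma$, or follows from two earlier entries by Modus Ponens. Applying division by $1/\gamma$ entrywise, I consider the sequence $\chi_1/(1/\gamma),\dots,\chi_m/(1/\gamma)$. Its last entry is $(\psi/\gamma)/(1/\gamma)=\psi/1=\psi$; an assumption entry $\phi_i/\gamma$ becomes $(\phi_i/\gamma)/(1/\gamma)=\phi_i$, which is among the new assumptions; a theorem entry $\chi_k$ becomes $\chi_k/(1/\gamma)$, still a theorem by the claim applied with $\mu=1/\gamma$; and since division commutes with implication, a Modus Ponens step on $\chi_j$ and $\chi_j\to\chi_k$ becomes a Modus Ponens step on $\chi_j/(1/\gamma)$ and $(\chi_j/(1/\gamma))\to(\chi_k/(1/\gamma))$. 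Thus the transformed sequence is a Modus-Ponens-only derivation of $\psi$ from $\phi_1,\dots,\phi_n$, which is exactly $\phi_1,\dots,\phi_n\vdash\psi$.

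I expect the main obstacle to be the auxiliary claim, specifically the bookkeeping in verifying that each of the four axiom schemes together with the propositional tautologies is closed under division. The Cooperation and Monotonicity cases, where the side conditions governing $x\cup y$ and $x\le_C y$ must be tracked across the division, are the ones most easily gotten wrong, and the tautology case requires the small but essential observation that division is a uniform substitution on the propositional atoms.
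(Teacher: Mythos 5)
Your proposal is correct and follows essentially the same route as the paper's own proof: the paper likewise observes that dividing every formula in a derivation by $\mu>0$ yields another derivation, then sets $\mu=\gamma^{-1}$ and uses the (implicit) identity $(\phi/\gamma)/\gamma^{-1}=\phi$. The only difference is that you spell out the axiom-by-axiom verification and the invertibility of division, which the paper asserts in a single sentence.
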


\begin{lemma}\label{supermonotonicity lemma} 
$\vdash[C]_x\phi\to [D]_y\phi$, where $C\subseteq D$ and $x\le_C y$.
\end{lemma}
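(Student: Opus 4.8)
The plan is to separate the two relaxations bundled into the statement — enlarging the coalition from $C$ to $D$ and weakening the individual cost bound from $x$ to $y$ — and to discharge each with a different tool. The cost weakening on the fixed coalition $C$ is handled directly by the Monotonicity axiom. The coalition enlargement is handled by recruiting the extra agents $D\setminus C$ as ``idle'' participants who maintain a tautology, and then merging them with $C$ using the superdistributivity Lemma~\ref{superdistributivity lemma}. This is a purely syntactic derivation, so no appeal to the semantics or to the perfect-recall assumption is needed.

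Concretely, I would set $E=D\setminus C$, so that $C\cap E=\varnothing$ and $C\cup E=D$, and write $z$ for the restriction of $y$ to $E$ and $y|_C$ for the restriction of $y$ to $C$; note that the union of functions with disjoint domains gives $y|_C\cup z=y$. First, fixing any propositional tautology $\top$ (which is a theorem), the Necessitation rule yields $\vdash [E]_z\top$. Next, since $\phi,\top\vdash\phi$ holds trivially and $C,E$ are disjoint, Lemma~\ref{superdistributivity lemma} applied with the two premises $[C]_{y|_C}\phi$ and $[E]_z\top$ gives $[C]_{y|_C}\phi,\,[E]_z\top\vdash[D]_y\phi$. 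Discharging the theorem $[E]_z\top$ (by the deduction theorem, which is available because the binary relation $X\vdash\psi$ uses only Modus Ponens, together with one application of Modus Ponens) leaves $\vdash [C]_{y|_C}\phi\to[D]_y\phi$. Finally, since $x\le_C y$ means $x\le_C y|_C$, the Monotonicity axiom gives $\vdash [C]_x\phi\to[C]_{y|_C}\phi$, and chaining this with the previous implication by propositional reasoning yields the desired $\vdash [C]_x\phi\to[D]_y\phi$.

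The main obstacle I anticipate is not conceptual but the bookkeeping of the cost functions under restriction and union: one must verify carefully that $y|_C\cup z=y$ on the domain $D$, and that the degenerate case $C=D$ (so $E=\varnothing$) is covered, in which case the second premise involves the empty coalition contributing the empty cost function and the argument collapses to a bare application of Monotonicity. The other point requiring care is the legitimacy of discharging $[E]_z\top$ as a theorem inside a $\vdash$-derivation, which is where the availability of the deduction theorem for the Modus-Ponens-only relation $X\vdash\psi$ is used.
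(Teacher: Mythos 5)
Your proof is correct and takes essentially the same route as the paper: both recruit the agents of $D\setminus C$ as idle participants maintaining a tautology via Necessitation, merge them with $C$ (you through Lemma~\ref{superdistributivity lemma}, the paper through the Cooperation axiom directly --- which is exactly what Lemma~\ref{superdistributivity lemma} unfolds to in the two-premise case), and absorb the cost slack with Monotonicity. The only cosmetic differences are the order of operations --- you apply Monotonicity first, on coalition $C$ (raising $x$ to $y|_C$ so the merge lands exactly on $y$), whereas the paper merges first to cost $y'\cup x$ and then applies Monotonicity on $D$ --- and your discharge of the theorem $[E]_z\top$ via the deduction theorem, which the paper sidesteps by using the Cooperation axiom's implication form and Modus Ponens directly.
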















\section{Completeness}\label{completeness section}

In this section, we prove the completeness of our logical system. We start the proof by defining the canonical game $(W,t,\Delta,\epsilon,M,\pi)$. The set $W$ is the set of all maximal consistent sets of formulae in language $\Phi$, and $t$ is an arbitrary element such that $t\notin W$. Let $\epsilon$ be an arbitrary element such that $\epsilon\notin\Phi$ and the set of actions $\Delta$ be $\Phi\cup\{\varepsilon\}$. 

\begin{definition}\label{canonical mechanism}
Mechanism $M$ is the set of all quadruples $(w,\delta,u,w')\in W\times \Delta^{\mathcal{A}}\times [0,\infty)^\mathcal{A}\times W^+$ such that if $[C]_x\phi\in w$ and $\delta(a)=[C]_x\phi$ for each agent $a\in C$, then 
\begin{enumerate}
    \item $u\le_C x$ and
    \item if $w'\neq t$, then $([C]_{x-u}\phi)/\gamma\in w'$.
\end{enumerate}
\end{definition}

Informally, action $\delta(a)=[C]_x\phi$ of an agent $a\in C$ means ``as a part of coalition $C$, I request to maintain condition $\phi$ at individual cost $x(b)$ to each member $b\in C$''. In order for the request to be valid, it should be submitted by all members of coalition $C$. Even if all members of coalition $C$ submit the request, it is enforced by the mechanism only if formula $[C]_x\phi$ belongs to the current state $w$. Condition~1 of Definition~\ref{canonical mechanism} stipulates that although the mechanism is free to set the cost $u$ of the transition below what the members of the coalition offered to pay, the mechanism {\em cannot overcharge} them. If the mechanism decides to charge members of the coalition the amount $u$ for transition to state $w'$, then it also must {\em provide the opportunity} for the members to continue to maintain the condition $\phi$ at cost $x-u$. The latter is captured by condition~2 of Definition~\ref{canonical mechanism}.

\begin{definition}\label{canonical pi}
$\pi(p)=\{w\in W\;|\; p\in w\}$. 
\end{definition}

This concludes the definition of the canonical game $(W,t,\Delta,\epsilon,M,\pi)$. As usual, the key step in proving the completeness theorem is an ``induction'' (or ``truth'') lemma, which in our case is Lemma~\ref{induction lemma}. Lemma~\ref{child all} and Lemma~\ref{child exists} below are two auxiliary lemmas that capture the two directions of the induction lemma in the case when formula $\phi$ has the form $[C]_x\psi$.

\begin{lemma}\label{child all}
For each state $w\in W^+$ and each formula $[C]_x\phi\in w$, there is strategy $s$ of coalition $C$ such that, 
for each play $w_0,\delta_0,u_0,w_1,\dots,u_{n-1},w_n$ satisfying strategy $s$, if $w=w_0$, then
\begin{enumerate}
    \item $\sum_{i=0}^{n-1} u_i\gamma^i\le_C x$ and
    \item if $w_n \neq t$, then $\phi/\gamma^n \in w_n$.
\end{enumerate}
\end{lemma}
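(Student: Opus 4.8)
The plan is to build the required strategy $s$ explicitly, by having every member of $C$ at each step ``re-request'' the modality that encodes the budget remaining after the cost incurred so far, and then to verify correctness by induction on the length of the play. Concretely, for a play prefix $p = w_0,\delta_0,u_0,\dots,w_k$ with $w_0 = w$, I would set
$$
y_k = \frac{1}{\gamma^k}\Big(x - \sum_{i=0}^{k-1}u_i\gamma^i\Big),
$$
and define $s(a,p) = [C]_{y_k}(\phi/\gamma^k)$ for each $a\in C$, provided $0\le_C y_k$; on every other prefix let $s$ take an arbitrary value such as $\varepsilon$. Intuitively $y_k$ is the budget still available at step $k$ measured in the ``future money'' of state $w_k$, and by Definition~\ref{divide} the action $[C]_{y_k}(\phi/\gamma^k)$ equals $([C]_{x-\sum_{i<k}u_i\gamma^i}\phi)/\gamma^k$.

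The heart of the argument is the claim that for every prefix $w_0,\dots,w_k$ satisfying $s$ with $w_0 = w$ and $w_k\neq t$, both $\sum_{i=0}^{k-1}u_i\gamma^i\le_C x$ and $([C]_{x-\sum_{i<k}u_i\gamma^i}\phi)/\gamma^k\in w_k$ hold. I would prove this by induction on $k$. The base case $k=0$ is immediate from the assumption $[C]_x\phi\in w$ together with the empty-sum convention. For the inductive step, the induction hypothesis guarantees $[C]_{y_k}(\phi/\gamma^k)\in w_k$ and $0\le_C y_k$, so this is a genuine formula and the strategy indeed prescribes $\delta_k(a)=[C]_{y_k}(\phi/\gamma^k)$ for all $a\in C$. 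Since $(w_k,\delta_k,u_k,w_{k+1})\in M$, Definition~\ref{canonical mechanism} applies to this request: its first clause gives $u_k\le_C y_k$, which rearranges to $\sum_{i=0}^{k}u_i\gamma^i\le_C x$, and (when $w_{k+1}\neq t$) its second clause gives $([C]_{y_k-u_k}(\phi/\gamma^k))/\gamma\in w_{k+1}$. The only genuine computation is to check via Definition~\ref{divide} that
$$
\big([C]_{y_k-u_k}(\phi/\gamma^k)\big)/\gamma = [C]_{(y_k-u_k)/\gamma}(\phi/\gamma^{k+1}) = [C]_{y_{k+1}}(\phi/\gamma^{k+1}),
$$
i.e.\ that the budget update $y_{k+1}=(y_k-u_k)/\gamma$ agrees with the closed form for $y_{k+1}$; this closes the induction.

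With the claim in hand the lemma follows quickly. Given a full play $w_0,\dots,w_n$ satisfying $s$ with $w_0=w$, if $w_n=t$ then $w_0,\dots,w_{n-1}$ are all non-terminal, so the claim at the length-$(n-1)$ prefix together with the first mechanism clause at the final transition yields part~1, while part~2 is vacuous. If $w_n\neq t$, the claim at $k=n$ yields part~1 directly and also places $[C]_{y_n}(\phi/\gamma^n)\in w_n$; since $w_n$ is a maximal consistent set containing the Reflexivity instance $[C]_{y_n}(\phi/\gamma^n)\to\phi/\gamma^n$, we conclude $\phi/\gamma^n\in w_n$, which is part~2.

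I expect the bookkeeping of discounting to be the main obstacle: one must keep the distinction between today's money and the ``future money'' of step $k$ straight so that the per-step mechanism bound $u_k\le_C y_k$ telescopes into the global discounted bound $\sum_{i<n}u_i\gamma^i\le_C x$, and one must confirm that the running budget $y_k$ stays nonnegative on $C$ so that each $[C]_{y_k}(\phi/\gamma^k)$ is a legitimate formula of $\Phi$. Everything else is a routine unwinding of Definition~\ref{canonical mechanism} and Definition~\ref{divide}.
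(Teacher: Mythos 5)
Your proposal is correct and takes essentially the same approach as the paper's own proof: the identical strategy (each member of $C$ re-requests $([C]_{x-\sum_{i<k}u_i\gamma^i}\phi)/\gamma^k$, i.e., the remaining budget expressed in future money, which is exactly your $[C]_{y_k}(\phi/\gamma^k)$), the same induction on prefix length driven by Definition~\ref{canonical mechanism}, and the same closing step via the Reflexivity axiom and maximality of $w_n$. The only cosmetic differences are your use of $\epsilon$ rather than $\top$ on off-budget prefixes and your packaging of the terminal-state case outside the induction claim (via one extra application of the mechanism's first clause) instead of keeping the budget bound unconditional inside the claim.
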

\begin{proof}
Let action $s(a,\lambda)$ for any agent $a\in C$ and any play
$
\lambda = w_0,\delta_0,u_0,w_1,\dots,u_{n-1},w_n
$ be defined\footnote{Informally, strategy $s$ always requests to maintain condition $\phi$ using remaining budget $x-z$.} as follows:
\begin{equation}\label{choice of s}
\hspace{-2mm}
s(a,\lambda)=
\begin{cases}
([C]_{x-z}\phi)/\gamma^n,&\mbox{if }z\le_C x,\\
\top, &\mbox{otherwise,}
\end{cases}  
\end{equation}
where $z=\sum_{i=0}^{n-1} u_i\gamma^i$.

Consider an arbitrary play $w_0,\delta_0,u_0,w_1,\dots,u_{n-1},w_n$ satisfying strategy $s$ such that $w=w_0$. It will be sufficient to show that conditions 1 and 2 of the lemma hold for this play.
\begin{claim}\label{induction claim}
For each $a\in C$ and each $k$ such that $0\le k\le n$,
\begin{enumerate}
    \item $\sum_{i=0}^{k-1} u_i\gamma^i\le_C x$, 
    \item if $w_k\neq t$, then $\left([C]_{{x-\sum_{i=0}^{k-1} u_i\gamma^i}}\phi\right)/\gamma^k \in w_k$.
\end{enumerate}
\end{claim}
\begin{proof-of-claim}
We prove the claim by induction on integer $k$.
If $k=0$, then
$\sum_{i=0}^{k-1} u_i\gamma^i=0 \le_C x$ by the definition of language $\Phi$ because $[C]_x\phi$ is a formula. Also,
\begin{eqnarray*}
\left([C]_{x-\sum_{i=0}^{k-1} u_i\gamma^i}\phi\right)/\gamma^k
=
\left([C]_{x-0}\phi\right)/\gamma^0
=
[C]_{x}\phi\in w_0
\end{eqnarray*}
by the assumption $[C]_{x}\phi\in w$ of the lemma and the assumption $w=w_0$.

Suppose $k>0$. Then, $(w_{k-1},\delta_{k-1},u_{k-1},w_k)\in M$ by Definition~\ref{play}, the assumption of the lemma that 
$w_0,\delta_0,u_0,w_1$, $\dots,u_{n-1},w_n$ is a play, and the assumption of the claim that $k\le n$. Thus, $w_{k-1}\neq t$ by item 5 of Definition~\ref{game}.
Hence, by the induction hypothesis,
\begin{eqnarray}
&&\sum_{i=0}^{k-2} u_i\gamma^i\le_C x,\label{ind hyp 1}\\
&&\left([C]_{x-\sum_{i=0}^{k-2} u_i\gamma^i}\phi\right)/\gamma^{k-1} \in w_{k-1}.\label{ind hyp 2}
\end{eqnarray}
By Definition~\ref{play satisfies strategy} (step i), equation~(\ref{ind hyp 1}) and equation~(\ref{choice of s}) (step ii), and item~4 of Definition~\ref{divide} (step iii),
\begin{eqnarray}
\delta_{k-1}(a)&\stackrel{\text{i}}{=}&s(a,(w_0,\delta_0,u_0,w_1,\dots,u_{k-2},w_{k-1}))\nonumber\\
&\stackrel{\text{ii}}{=}& ([C]_{x-\sum_{i=0}^{k-2} u_i\gamma^i}\phi)/\gamma^{k-1}\nonumber\\
&\stackrel{\text{iii}}{=}& [C]_{(x-\sum_{i=0}^{k-2} u_i\gamma^i)/\gamma^{k-1}}(\phi/\gamma^{k-1}).\label{delta k -1}
\end{eqnarray}

At the same time, by item 4 of Definition~\ref{divide} (step iv) and equation~(\ref{ind hyp 2}) (step v),
\begin{eqnarray}
&&\hspace{-25mm}    [C]_{(x-\sum_{i=0}^{k-2} u_i\gamma^i)/\gamma^{k-1}}(\phi/\gamma^{k-1}) \nonumber\\
&& \hspace{-10mm}\stackrel{\text{iv}}{=} \left([C]_{{x-\sum_{i=0}^{k-2} u_i\gamma^i}}\phi\right)/\gamma^{k-1}\nonumber\\
&&\hspace{-10mm}\stackrel{\text{v}}{\in} w_{k-1}.\label{w k -1}
\end{eqnarray}
Also, $(w_{k-1},\delta_{k-1},u_{k-1},w_{k})\in M$ by Definition~\ref{play} and the assumption that $w_0,\delta_0,u_0,w_1,\dots,u_{n-1},w_n$ is a play. Thus, by Definition~\ref{canonical mechanism} and statements~(\ref{w k -1}) and (\ref{delta k -1}),
\begin{enumerate}
    \item $u_{k-1}\le_C \left(x-\sum_{i=0}^{k-2} u_i\gamma^i\right)/\gamma^{k-1}$ and
    \item if $w_k\neq t$, then $$([C]_{((x-\sum_{i=0}^{k-2} u_i\gamma^i)/\gamma^{k-1} - u_{k-1})}(\phi/\gamma^{k-1}))/\gamma \in w_{k}.$$
\end{enumerate}
Thus, by the laws of algebra and item 4 of Definition~\ref{divide},
\begin{enumerate}
    \item $u_{k-1}\gamma^{k-1}\le_C x-\sum_{i=0}^{k-2} u_i\gamma^i$ and
    \item if $w_k\neq t$, then $$([C]_{((x-\sum_{i=0}^{k-2} u_i\gamma^i) - u_{k-1}\gamma^{k-1})}\phi)/\gamma^{k} \in w_{k}.$$
\end{enumerate}
The last two statements imply, respectively, parts 1 and 2 of the claim.
\end{proof-of-claim}
The statement of the lemma follows from the above claim when $k=n$. The first part follows immediately. To show the second part, note that by Definition~\ref{divide}, item $2$ of the claim implies $\left([C]_{({x-\sum_{i=0}^{n-1} u_i\gamma^i})/\gamma^n}(\phi/\gamma^n)\right) \in w_n$. Thus, $w_n\vdash \phi/\gamma^n$ by the Reflexivity axiom and the Modus Ponens inference rule. Therefore, $\phi/\gamma^n\in w_n$ because set $w_n$ is maximal.
\end{proof}



\begin{lemma}\label{child exists}
For each state $w\in W$, each formula  $[C]_x\phi\notin w$, and each action profile $\alpha$ of coalition $C$, there is a complete action profile $\delta$, a cost function $u\in [0,+\infty)^\mathcal{A}$, and a state $w'\in W^+$ such that $\alpha=_C\delta$, $(w,\delta,u,w')\in M$, and either (i) $u\not\le_C x$ or (ii) $w'\neq t$ and $\phi/\gamma\notin w'$. 
\end{lemma}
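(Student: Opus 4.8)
The plan is to choose the complete profile $\delta$ that extends $\alpha$ by the zero-cost action outside $C$, that is $\delta =_C \alpha$ and $\delta(a) = \epsilon$ for every $a \notin C$, and then to exhibit a ``bad'' mechanism response. First I would determine which requests the mechanism must honor. By Definition~\ref{canonical mechanism}, the only active constraints come from formulae $[C']_{x'}\phi' \in w$ that every member of $C'$ submits; since $\delta(a) = \epsilon$ for $a \notin C$, every such $C'$ satisfies $C' \subseteq C$, and because each agent submits a single formula, two distinct active requests must have disjoint coalitions. Thus the active requests form a finite list $[C_1]_{x_1}\phi_1, \dots, [C_m]_{x_m}\phi_m \in w$ with the $C_j \subseteq C$ pairwise disjoint; writing $C^* = C_1 \cup \dots \cup C_m$ and $x^* = x_1 \cup \dots \cup x_m$, a pair $(u,w')$ gives $(w,\delta,u,w') \in M$ exactly when $u \le_{C_j} x_j$ for all $j$ and, if $w' \ne t$, $([C_j]_{x_j - u}\phi_j)/\gamma \in w'$ for all $j$.

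Next I would try to overcharge. If some agent $a \in C$ admits a cost above $x(a)$ under these constraints---either $a \notin C^*$ with $\alpha(a) \ne \epsilon$ (so $u(a)$ is unconstrained), or $a \in C_j$ with $x_j(a) > x(a)$---then I set $u(a)$ strictly above $x(a)$ (and within $x_j(a)$ in the second case), $u$ equal to $0$ elsewhere, and $w' = t$. This pair lies in $M$ and satisfies $u \not\le_C x$, giving alternative (i). It then remains to treat the case where no agent is overchargeable, which forces $x^* \le_{C^*} x$ and $\alpha(a) = \epsilon$ for every $a \in C \setminus C^*$.

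In that case I would look for a bad target using $u = 0$. Consider the set $S = \{\,([C_j]_{x_j}\phi_j)/\gamma : j \le m\,\}$. If $S \cup \{\neg(\phi/\gamma)\}$ is consistent, extend it to a maximal consistent $w'$; then $(w,\delta,0,w') \in M$, $w' \ne t$, and $\phi/\gamma \notin w'$, giving alternative (ii). The crux is to show that the remaining possibility---$S \vdash \phi/\gamma$ together with $[C]_x\phi \notin w$---cannot occur. Here I would run the following derivation: from $S \vdash \phi/\gamma$ and the identity $([C_j]_{x_j}\phi_j)/\gamma = [C_j]_{x_j/\gamma}(\phi_j/\gamma)$, the Multiplication Lemma (Lemma~\ref{multiplication lemma}) gives $[C_1]_{x_1}\phi_1, \dots, [C_m]_{x_m}\phi_m \vdash \phi$. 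Applying the Transitivity axiom to each $[C_j]_{x_j}\phi_j \in w$ yields $[C_j]_{x_j}[C_j]_{x_j}\phi_j \in w$, and then Superdistributivity (Lemma~\ref{superdistributivity lemma}), with the formulae $[C_j]_{x_j}\phi_j$ playing the role of the inner conjuncts, gives $[C^*]_{x^*}\phi \in w$. Finally Supermonotonicity (Lemma~\ref{supermonotonicity lemma}), using $C^* \subseteq C$ and $x^* \le_{C^*} x$, yields $[C]_x\phi \in w$, contradicting the hypothesis $[C]_x\phi \notin w$.

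The main obstacle I anticipate is exactly this last derivation: one has to pass the discount factor through the entailment correctly via the Multiplication Lemma, and then re-box the resulting entailment using the combination of the Transitivity axiom and Superdistributivity---treating the already-boxed formulae as atoms---before inflating the coalition and budget with Supermonotonicity. Establishing the pairwise disjointness of the active coalitions, and the standard fact that a formula lies in every maximal consistent extension of $S$ if and only if $S$ proves it, are the routine supporting ingredients.
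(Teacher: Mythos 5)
Your overall strategy coincides with the paper's: extend $\alpha$ to a complete profile by a dummy action outside $C$ (you use $\epsilon$, the paper uses $\top$; both work, and you correctly keep $u(a)=0$ wherever $\epsilon$ is played, as item 5(a) of Definition~\ref{game} requires), then either overcharge some agent and send the game to the terminal state (alternative (i)), or take $u=0$ and build a maximal consistent successor containing $\neg(\phi/\gamma)$ together with the discounted active requests (alternative (ii)). Your consistency argument is exactly the paper's derivation chain: Lemma~\ref{multiplication lemma}, then Lemma~\ref{superdistributivity lemma} combined with the Transitivity axiom, then Lemma~\ref{supermonotonicity lemma}, contradicting $[C]_x\phi\notin w$. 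The cosmetic differences (your case split is on ``overchargeability'' rather than on the paper's pre-defined bluffing cost function $u$) are immaterial.

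There is, however, one genuine flaw: the claim that ``the active requests form a finite list.'' Under your own (correct) reading of Definition~\ref{canonical mechanism}, a formula $[D]_y\psi\in w$ is active whenever \emph{every} member of $D$ submits it, and for $D=\varnothing$ this condition holds vacuously. Every maximal consistent set contains infinitely many such formulae --- for instance $[\varnothing]_{0}\theta$ for every theorem $\theta$, by the Necessitation rule --- and each of them constrains \emph{every} non-terminal transition: if $w'\neq t$, then $([\varnothing]_{y}\psi)/\gamma$ must belong to $w'$. So the set of active requests is infinite (your disjointness argument also gives no bound when coalitions may be empty, or when $C$ itself is infinite, which the paper never excludes), and these empty-coalition requests must be included in your set $S$; if they were omitted, the tuple $(w,\delta,0,w')$ you construct would simply fail to be in $M$. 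The repair is standard and is what the paper's proof does: let $S$ be the possibly infinite set of all discounted active requests; if $S\cup\{\neg(\phi/\gamma)\}$ is inconsistent, finiteness of derivations yields finitely many active requests $[D_1]_{y_1}\psi_1,\dots,[D_n]_{y_n}\psi_n$ with $([D_1]_{y_1}\psi_1)/\gamma,\dots,([D_n]_{y_n}\psi_n)/\gamma\vdash\phi/\gamma$, and your derivation is then run on this finite subfamily (pairwise disjointness, $C^*\subseteq C$, and $x^*\le_{C^*}x$ all survive restriction, and empty coalitions pass through all three lemmas harmlessly). With that correction your argument goes through.
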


\begin{proof}
Define the complete action profile 
\begin{equation}\label{definition of delta}
\delta(a)=
\begin{cases}
\alpha(a), & \mbox{ if } a\in C,\\
\top, & \mbox{ otherwise}
\end{cases}    
\end{equation}
and cost function\footnote{The choice of function $u$ is perhaps the most unexpected step in our proof. Informally, if agent $a$ is ``bluffing'' and is offering to pay more than $x(a)$, then function $u$ charges the agent the amount she offered to pay, $y(a)$. If the agent makes a ``modest'' offer of no more than $x(a)$, then she is not charged at all.}
$$
u(a)=
\begin{cases}
y(a), & \mbox{if $\alpha(a)=[D]_y\psi$ for some $[D]_y\psi\in\Phi$}, \\
& \mbox{where  $a\in C$ and  $y(a)>x(a)$},\\
0, & \mbox{otherwise}.
\end{cases}
$$

Note that $\alpha=_C\delta$. We consider the following two cases:

\vspace{1mm}
\noindent\textbf{Case I}: $u(a)=0$ for each agent $a\in C$. 
Define set $X$ to be
\begin{eqnarray*}
    X &=& \{\neg(\phi/\gamma)\} \cup \{([D]_{y}\psi)/\gamma\;|\; [D]_y\psi\in w, D\subseteq C, \\
    &&\hspace{30mm}\forall a\in D(\alpha(a)=[D]_y\psi)\}.
\end{eqnarray*}

\begin{claim}
Set $X$ is consistent.
\end{claim}
\begin{proof-of-claim}
Suppose set $X$ is not consistent. Thus, there are formulae
\begin{equation}\label{all about Dpsi}
    [D_1]_{y_1}\psi_1,\dots,[D_n]_{y_n}\psi_n\in w
\end{equation}
such that
\begin{eqnarray}
&&D_1,\dots,D_n\subseteq C,\label{about Ds}\\
&&\alpha(a)=[D_i]_{y_i}\psi_i \hspace{8mm}\forall i\le n \hspace{2mm}\forall a\in D_i,\label{all about alpha}
\end{eqnarray}
and 
\begin{equation}
    ([D_1]_{y_1}\psi_1)/\gamma,\dots, ([D_n]_{y_n}\psi_n)/\gamma \vdash \phi/\gamma.\label{divided by gamma}
\end{equation}
Without loss of generality, we can assume that formulae $([D_1]_{y_1}\psi_1)/\gamma,\dots, ([D_n]_{y_n}\psi_n)/\gamma$ are distinct. Thus, formulae $[D_1]_{y_1}\psi_1,\dots,[D_n]_{y_n}\psi_n$ are also distinct by Definition~\ref{divide}.  Hence, sets $D_1,\dots,D_n$ are pairwise disjoint due to assumption~(\ref{all about alpha}).

By Lemma~\ref{multiplication lemma}, statement~(\ref{divided by gamma}) implies that
$$
[D_1]_{y_1}\psi_1,\dots, [D_n]_{y_n}\psi_n \vdash \phi.
$$
Then, by Lemma~\ref{superdistributivity lemma} and because sets $D_1,\dots,D_n$ are pairwise disjoint, 
\begin{eqnarray*}
&&\hspace{-10mm}[D_1]_{y_1}[D_1]_{y_1}\psi_1,\dots, [D_n]_{y_n}[D_n]_{y_n}\psi_n \\
&&\vdash [D_1\cup\dots\cup D_n]_{y_1\cup \dots \cup y_n}\phi.
\end{eqnarray*}
Thus, by the Transitivity axiom and the Modus Ponens inference rule applied $n$ times,
\begin{eqnarray*}
&[D_1]_{y_1}\psi_1,\dots, [D_n]_{y_n}\psi_n 
\vdash [D_1\cup\dots\cup D_n]_{y_1\cup \dots \cup y_n}\phi.
\end{eqnarray*}
Notice that $y_i(a)\le x(a)$ for any $i\le n$ and any agent $a\in D_i$. Indeed, suppose that $y_i(a)> x(a)$. Hence $u(a)=y_i(a)$ by the choice of cost function $u$ and statements~(\ref{about Ds}) and (\ref{all about alpha}). Thus, $u(a)>x(a)$. Then, $u(a)>0$ because function $x$ is non-negative by the assumption $[C]_x\phi\in\Phi$, which contradicts the assumption $u(a)=0$ of the case.  Hence, by Lemma~\ref{supermonotonicity lemma} and the Modus Ponens inference rule,
$$
[D_1]_{y_1}\psi_1,\dots, [D_n]_{y_n}\psi_n \vdash [C]_{x}\phi.
$$
Then, 
$w \vdash [C]_{x}\phi$ 
by the assumption~(\ref{all about Dpsi}).
Thus, $[C]_{x}\phi\in w$ because set $w$ is maximal, which contradicts the assumption $[C]_{x}\phi\notin w$ of the lemma.
\end{proof-of-claim}

Let $w'$ be any maximal consistent extension of set $X$. Note that $\neg(\phi/\gamma)\in X\subseteq w'$ by the choice of sets $X$ and $w'$. Thus, $\phi/\gamma\notin w'$ because set $w'$ is consistent.

\begin{claim}
$(w,\delta,u,w')\in M$.
\end{claim}
\begin{proof-of-claim}
Consider any formula $[D]_y\psi\in w$ such that
\begin{equation}\label{lost assumption}
    \delta(a)=[D]_y\psi\;\;\;\mbox{for each agent}\;\;\; a\in D.
\end{equation}
By Definition~\ref{canonical mechanism}, it suffices to show that $u\le_D y$ and $([D]_{y-u}\psi)/\gamma\in w'$. We consider the following two cases:

\noindent{\em Case Ia}: $D\subseteq C$. Thus, $\alpha(a)=\delta(a)=[D]_y\psi$ for each agent $a\in D$ by equation~(\ref{definition of delta}) and  assumption~(\ref{lost assumption}). Hence, $([D]_y\psi)/\gamma\in X$ by the choice of set $X$. Then, $([D]_{y-u}\psi)/\gamma\in X$ by the assumption of Case I that $u=_C 0$ and the assumption $D\subseteq C$ of Case Ia. Therefore, $([D]_{y-u}\psi)/\gamma\in w'$ by the choice of set $w'$. Additionally, $u=_D 0\le_D y$ because $0\le_D y$ by the definition of language $\Phi$.

\noindent{\em Case Ib}: There is an agent $a\in D\setminus C$. Hence, $\top=\delta(a)=[D]_y\psi$ by equation~(\ref{definition of delta}) and assumption~(\ref{lost assumption}). Therefore, formula $[D]_y\psi$ is identical to formula $\top$, which is a contradiction.
\end{proof-of-claim}
Note that $\neg(\phi/\gamma)\in X\subseteq w'$ by the choice of sets $X$ and $w'$. Therefore, $\phi/\gamma\notin w'$ because set $w'$ is consistent.

\vspace{1mm}
\noindent\textbf{Case II}: $u(a) \neq 0$ for at least one agent $a\in C$. Thus, $u(a)=y(a)>x(a)$ by the choice of function $u$. Therefore, $u\not\le_C x$. Choose $w'$ to be the terminal state $t$. 

\begin{claim}
$(w,\delta,u,w')\in M$.
\end{claim}
\begin{proof-of-claim}
Consider any formula $[D]_y\psi\in w$ such $\delta(a)=[D]_y\psi$ for each agent $a\in D$.
By Definition~\ref{canonical mechanism} and because $w'=t$, it suffices to show that $u\le_D y$. Recall that $0\le_D y$ because $[D]_y\psi$ is a formula. Therefore, $u\le_D y$ by the choice of function $u$.
\end{proof-of-claim}
This concludes the proof of the lemma.
\end{proof}

The next lemma is usually referred to as an ``induction'' or ``truth'' lemma. It is proven by induction on the structural complexity of formula $\phi$ using Lemma~\ref{child all} and Lemma~\ref{child exists} in the case where formula $\phi$ has the form $[C]_x\psi$. The proof of this lemma can be found in the appendix.

\begin{lemma}\label{induction lemma}
$w\Vdash\phi$ iff $\phi\in w$ for each state $w\in W$ and each formula $\phi\in\Phi$.
\end{lemma}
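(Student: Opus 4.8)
The plan is to prove the biconditional by induction on the structural complexity of $\phi$, where each of $\neg$, $\to$, and $[C]_x$ adds one to the complexity of its argument. The observation that makes this induction well-founded is that dividing a formula by a positive real does not alter its structural complexity: by Definition~\ref{divide}, the operation $\phi\mapsto\phi/\mu$ mirrors the parse tree of $\phi$ and only rescales the cost subscripts. Consequently, although the semantics of $[C]_x\psi$ in Definition~\ref{sat} refers to satisfaction of the rescaled formulae $\psi/\gamma^n$, each such formula has the same complexity as $\psi$ and hence strictly smaller complexity than $[C]_x\psi$, so the induction hypothesis is available for it.

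The base case $\phi=p$ is immediate from item~1 of Definition~\ref{sat} together with Definition~\ref{canonical pi}. The Boolean cases $\phi=\neg\psi$ and $\phi=\psi\to\chi$ are the standard maximal-consistent-set arguments: one combines items~2 and~3 of Definition~\ref{sat}, the induction hypothesis for $\psi$ and $\chi$, and the fact that each $w\in W$ is a maximal consistent set, so it contains exactly one of any formula and its negation and is closed under Modus Ponens.

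The substantive case is $\phi=[C]_x\psi$, and here the two auxiliary lemmas supply the two directions. For the direction $[C]_x\psi\in w\Rightarrow w\Vdash[C]_x\psi$, I would invoke Lemma~\ref{child all} to obtain a strategy $s$ of $C$ such that every play satisfying $s$ and starting at $w$ meets the budget bound $\sum_i u_i\gamma^i\le_C x$ and, whenever it does not terminate, ends in a state $w_n$ with $\psi/\gamma^n\in w_n$. Applying the induction hypothesis to $\psi/\gamma^n$ converts the membership $\psi/\gamma^n\in w_n$ into satisfaction $w_n\Vdash\psi/\gamma^n$, so $s$ witnesses item~4 of Definition~\ref{sat} and $w\Vdash[C]_x\psi$. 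For the contrapositive direction, assume $[C]_x\psi\notin w$ and let $s$ be an arbitrary strategy of $C$; I must produce a play satisfying $s$ that refutes one of the two conditions. The key move is to read off the initial action profile $\alpha(a):=s(a,(w))$ of the singleton play $(w)$ and feed $\alpha$ to Lemma~\ref{child exists}. This yields a complete profile $\delta$ with $\alpha=_C\delta$, a cost $u$, and a successor $w'$ with $(w,\delta,u,w')\in M$ such that either $u\not\le_C x$ or else $w'\neq t$ and $\psi/\gamma\notin w'$. Since $\delta$ agrees with $s$ on $C$ at the singleton play, the one-step play $w,\delta,u,w'$ satisfies $s$; in the first case it violates item~4(a), and in the second case the induction hypothesis applied to $\psi/\gamma$ gives $w'\nVdash\psi/\gamma$, violating item~4(b) with $n=1$. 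As $s$ was arbitrary, $w\nVdash[C]_x\psi$.

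I expect the main conceptual obstacle to be bookkeeping the interaction between the division operation and the induction measure---specifically, recognizing that the modal clause reduces $[C]_x\psi$ not to $\psi$ but to the family $\{\psi/\gamma^n\}$, and that these remain below $[C]_x\psi$ in complexity so that the induction goes through. The remaining delicate point is the reduction of an arbitrary perfect-recall strategy to a single first move in the contrapositive direction: because Lemma~\ref{child exists} only needs the coalition's action at $w$ and returns a matching one-step transition, it suffices to refute $s$ along a length-one play, and no reasoning about the strategy's later behaviour is required.
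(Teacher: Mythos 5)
Your proposal is correct and follows essentially the same route as the paper's own proof: induction on structural complexity, the standard arguments for atoms and Boolean connectives, Lemma~\ref{child all} for the membership-to-satisfaction direction, and Lemma~\ref{child exists} applied to the strategy's first move at $w$ to build a refuting one-step play in the other direction. Your explicit remark that $\phi/\mu$ preserves structural complexity (so the induction hypothesis applies to $\psi/\gamma^n$) is a point the paper leaves implicit, but the argument is otherwise the same.
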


\begin{theorem}\label{completeness theorem}
If $X\nvdash\phi$, then there is a state $w$ of a game such that $w\Vdash\chi$ for each $\chi\in X$ and $w\nVdash\phi$. 
\end{theorem}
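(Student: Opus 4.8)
The plan is to run the standard canonical-model argument: convert the unprovability hypothesis into the existence of a suitable maximal consistent set, and then invoke the induction (truth) lemma to transfer set membership into semantic satisfaction. Since Lemma~\ref{induction lemma} already carries all of the semantic weight, the proof of the theorem itself should be short and routine.

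First I would argue at the level of consistency, establishing that $X\cup\{\neg\phi\}$ is consistent. This is the contrapositive of the usual ``unprovable implies negation-consistent'' fact: if $X\cup\{\neg\phi\}$ were inconsistent, then purely by propositional reasoning---using that all propositional tautologies are theorems and that $X\vdash$ is closed under Modus Ponens---one could derive $\phi$ from the theorems together with $X$ using Modus Ponens alone, contradicting the assumption $X\nvdash\phi$. The only thing to be careful about here is that the derivability relation $X\vdash$ is defined to permit \emph{only} Modus Ponens from theorems plus $X$; since the propositional fragment is entirely handled by tautologies and Modus Ponens, this restricted relation still supports the reasoning.

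Next I would apply Lindenbaum's lemma to extend the consistent set $X\cup\{\neg\phi\}$ to a maximal consistent set $w$. By the definition of the canonical game given at the start of Section~\ref{completeness section}, the state set $W$ is precisely the collection of all maximal consistent sets, so $w$ is a genuine state of the canonical game $(W,t,\Delta,\epsilon,M,\pi)$. Because $X\subseteq w$ and $\neg\phi\in w$, the maximality and consistency of $w$ give $\chi\in w$ for every $\chi\in X$ and $\phi\notin w$.

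Finally, I would invoke Lemma~\ref{induction lemma}, which asserts that $w\Vdash\psi$ iff $\psi\in w$ for every state $w\in W$ and every formula $\psi\in\Phi$. Applying it to each $\chi\in X$ yields $w\Vdash\chi$, and applying it to $\phi$ (using $\phi\notin w$) yields $w\nVdash\phi$, which is exactly the conclusion of the theorem. The genuinely nontrivial content lives entirely in Lemma~\ref{induction lemma} (itself built on Lemma~\ref{child all} and Lemma~\ref{child exists}), so for this final theorem the main obstacle is merely the consistency step above; everything else is a direct bookkeeping application of the truth lemma to the Lindenbaum extension.
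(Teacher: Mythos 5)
Your proposal is correct and follows essentially the same route as the paper: take a maximal consistent (Lindenbaum) extension $w$ of $X\cup\{\neg\phi\}$, observe that $w$ is a state of the canonical game, and apply Lemma~\ref{induction lemma}. The only cosmetic differences are that you spell out the consistency of $X\cup\{\neg\phi\}$ (which the paper leaves implicit) and that you conclude $w\nVdash\phi$ directly from $\phi\notin w$, whereas the paper applies the truth lemma to $\neg\phi$ and then uses item~2 of Definition~\ref{sat}.
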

\begin{proof}
Suppose that $X\nvdash\phi$. Let $w$ be any maximal consistent extension of set  $X\cup\{\neg\phi\}$. Note that $w$ is a state of the canonical game. Then, $w\Vdash\chi$ for each $\chi\in X$ and  $w\Vdash\neg\phi$ by Lemma~\ref{induction lemma}. Therefore, $w\nVdash\phi$ by Definition~\ref{sat}.
\end{proof}

\section{Conclusion}\label{conclusion section}

In this paper we proposed a coalition power logic whose semantics incorporates discounting. The main technical result is a strongly sound and strongly complete logical system for coalition strategies with perfect recall. 

\label{end of paper}
\bibliographystyle{named}
\bibliography{sp}

\clearpage

\begin{center}
    {\sc\Large Technical Appendix}
    
    \vspace{5mm}
    
    This appendix is not a part of IJCAI-21 proceedings.
\end{center}

\appendix

\section{Soundness}

In this section we prove soundness of each of our axioms as a separate lemma. In these lemmas we assume that $w$ is an arbitrary game state of a game $(W,t,\Delta,\epsilon,M,\pi)$.

\begin{lemma}
If $w\Vdash [C]_x\phi$, then $w\Vdash \phi$.
\end{lemma}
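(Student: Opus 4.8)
The plan is to exploit the degenerate, length-zero play that starts and ends at $w$. By item 4 of Definition~\ref{sat}, the hypothesis $w\Vdash[C]_x\phi$ supplies a strategy $s$ of coalition $C$ such that every play satisfying $s$ and beginning at $w$ meets conditions 4(a) and 4(b). First I would observe that the one-element sequence consisting of the single state $w$ (i.e.\ $n=0$) is a play in the sense of Definition~\ref{play}: the only constraint when $n=0$ is $w_0\in W^+$, and indeed $w\in W\subseteq W^+$. Moreover, this play satisfies $s$ vacuously by Definition~\ref{play satisfies strategy}, since there is no index $i$ with $0\le i<0$.

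Next I would apply item 4(b) of Definition~\ref{sat} to this play. Because $w_0=w\in W$ and $t\notin W$ by item 2 of Definition~\ref{game}, we have $w_0\neq t$, so the conclusion $w_0\Vdash\phi/\gamma^0$ is triggered. It then remains to identify $\phi/\gamma^0$ with $\phi$. Since $\gamma^0=1$, a routine induction on the structure of $\phi$ using Definition~\ref{divide} gives $\phi/1=\phi$: the base case $p/1=p$ is immediate, and in the modal case item~4 of Definition~\ref{divide} reduces to $[C]_{x/1}(\phi/1)=[C]_x\phi$ because $x/1=x$. Hence $w\Vdash\phi$, as required.

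The only point requiring care—and it is hardly an obstacle—is confirming that the $n=0$ play is genuinely admitted by Definitions~\ref{play} and~\ref{play satisfies strategy}, and that the empty sum $\sum_{i=0}^{n-1}u_i\gamma^i$ appearing in 4(a) causes no difficulty (it is simply the zero function). Once this degenerate case is granted, the argument is immediate; notably, it uses only condition 4(b) and never appeals to the cost bound 4(a).
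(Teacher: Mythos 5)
Your proposal is correct and follows essentially the same route as the paper's own proof: both instantiate item 4 of Definition~\ref{sat} on the degenerate single-state play $w$ (with $n=0$) and then identify $\phi/\gamma^0=\phi/1$ with $\phi$ via Definition~\ref{divide}. Your extra checks (vacuous satisfaction of the strategy, $w\neq t$, and the structural induction showing $\phi/1=\phi$) are details the paper leaves implicit, but they add nothing beyond the same argument.
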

\begin{proof}
Single-element  sequence $w$ is a play by Definition~\ref{play}. Thus, by item 4 of Definition~\ref{sat}, the assumption $w\Vdash [C]_x\phi$ implies that $w\Vdash \phi/\gamma^0$. Hence, $w\Vdash \phi/1$. Therefore, $w\Vdash \phi$ by Definition~\ref{divide}.
\end{proof}

\begin{lemma}
If $w\Vdash [C]_x\phi$ and $x\le_C y$, then $w\Vdash [C]_y\phi$.
\end{lemma}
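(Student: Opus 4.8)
The plan is to reuse the very same witnessing strategy. By item~4 of Definition~\ref{sat}, the assumption $w\Vdash [C]_x\phi$ supplies a strategy $s$ of coalition $C$ such that every play satisfying $s$ and starting in $w$ obeys the two conditions 4(a) and 4(b) with the budget $x$. I would claim that this identical strategy $s$ also witnesses $w\Vdash [C]_y\phi$, so that no new strategy needs to be constructed.

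To verify the claim, I would fix an arbitrary play $w_0,\delta_0,u_0,w_1,\dots,u_{n-1},w_n$ satisfying strategy $s$ with $w=w_0$, and check conditions 4(a) and 4(b) against the larger budget $y$. Condition 4(b) makes no reference to the cost bound at all, so it transfers verbatim from the assumption: if $w_n\neq t$, then $w_n\Vdash\phi/\gamma^n$. For condition 4(a), the assumption yields $\sum_{i=0}^{n-1}u_i\gamma^i\le_C x$, and combining this with the hypothesis $x\le_C y$ by the transitivity of the pointwise order $\le_C$ gives $\sum_{i=0}^{n-1}u_i\gamma^i\le_C y$, exactly as required.

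I expect no real obstacle here: the argument is a routine monotonicity observation, and it is simply the semantic counterpart of the Monotonicity axiom. The only substantive point is that enlarging the budget from $x$ to $y$ can only relax condition 4(a) while leaving condition 4(b) untouched. In particular, the proof requires neither a structural induction nor any appeal to the auxiliary Lemma~\ref{child all}.
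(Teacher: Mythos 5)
Your proposal is correct and follows exactly the paper's own argument: reuse the witnessing strategy $s$ from $w\Vdash [C]_x\phi$, note that condition 4(b) of Definition~\ref{sat} is unaffected by the budget, and relax condition 4(a) from $x$ to $y$ via $x\le_C y$. No gaps; this is the intended routine monotonicity proof.
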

\begin{proof}
By item 4 of Definition~\ref{sat}, the assumption $w\Vdash [C]_x\phi$ implies that there is a strategy $s$ of coalition $C$ such that for any play $w_0,\delta_0,u_0,w_1,\dots,u_{n-1},w_n\in Play$ that satisfies strategy $s$, if $w=w_0$, then 
    \begin{enumerate}
        \item $\sum_{i = 0}^{n-1}u_i\gamma^i\le_C x$ and
        \item if $w_n\neq t$, then $w_n\Vdash \phi/\gamma^n$.
    \end{enumerate}
Note that condition 1 above implies that $\sum_{i = 0}^{n-1}u_i\gamma^i\le_C y$ by the assumption $x\le_C y$ of the lemma. Therefore, $w\Vdash [C]_y\phi$ again by item 4 of Definition~\ref{sat}.
\end{proof}

\begin{lemma}
If $w\Vdash [C]_x(\phi\to\psi)$, $w\Vdash [D]_y\phi$, and coalitions $C$ and $D$ are disjoint, then $w\Vdash [C\cup D]_{x\cup y}\psi$.
\end{lemma}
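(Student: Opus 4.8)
The plan is to combine the two witnessing strategies into a single strategy for the union coalition, exploiting the disjointness of $C$ and $D$. Let $s_1$ be a strategy of $C$ witnessing $w\Vdash[C]_x(\phi\to\psi)$ and let $s_2$ be a strategy of $D$ witnessing $w\Vdash[D]_y\phi$, both obtained from item~4 of Definition~\ref{sat}. Since $C\cap D=\varnothing$, I define a strategy $s$ of $C\cup D$ by setting $s(a,\lambda)=s_1(a,\lambda)$ for $a\in C$ and $s(a,\lambda)=s_2(a,\lambda)$ for $a\in D$; this is well defined precisely because the two domains are disjoint.

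The key observation is that any play satisfying $s$ automatically satisfies both $s_1$ and $s_2$. Indeed, by Definition~\ref{play satisfies strategy}, if a play $w_0,\delta_0,u_0,\dots,w_n$ with $w=w_0$ satisfies $s$, then $\delta_i(a)=s(a,\cdot)=s_1(a,\cdot)$ for every $a\in C$ and every $i<n$, which is exactly the condition for the play to satisfy $s_1$; the symmetric argument with $s_2$ and $D$ works as well. I can therefore invoke the guarantees of both $s_1$ and $s_2$ on this single play.

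It then remains to verify the two clauses of item~4 of Definition~\ref{sat} for $s$, $C\cup D$, $x\cup y$, and $\psi$. For the cost clause, the guarantees of $s_1$ and $s_2$ give $\sum_{i=0}^{n-1}u_i\gamma^i\le_C x$ and $\sum_{i=0}^{n-1}u_i\gamma^i\le_D y$; since $x\cup y$ is the union of functions with disjoint domains $C$ and $D$, these combine to $\sum_{i=0}^{n-1}u_i\gamma^i\le_{C\cup D}x\cup y$. For the maintenance clause, suppose $w_n\neq t$. Then $s_1$ yields $w_n\Vdash(\phi\to\psi)/\gamma^n$ and $s_2$ yields $w_n\Vdash\phi/\gamma^n$. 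The crucial algebraic fact here is item~3 of Definition~\ref{divide}, which rewrites $(\phi\to\psi)/\gamma^n$ as $(\phi/\gamma^n)\to(\psi/\gamma^n)$; applying the semantics of implication (item~3 of Definition~\ref{sat}) to $w_n\Vdash(\phi/\gamma^n)\to(\psi/\gamma^n)$ and $w_n\Vdash\phi/\gamma^n$ then delivers $w_n\Vdash\psi/\gamma^n$. This establishes $w\Vdash[C\cup D]_{x\cup y}\psi$.

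There is no deep obstacle; the argument is essentially bookkeeping, and the discount factor plays no real role. The two points that require care are: (i) checking that a play satisfying the merged strategy satisfies each component strategy, for which disjointness of $C$ and $D$ is exactly what makes $s$ well defined and the restriction clean; and (ii) remembering that division by $\gamma^n$ distributes over implication, so that a single modus-ponens step at state $w_n$ transfers the two maintained conditions into the maintained conclusion $\psi/\gamma^n$.
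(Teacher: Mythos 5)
Your proof is correct and follows essentially the same route as the paper's own proof: merge the two witnessing strategies using disjointness of $C$ and $D$, observe that a play satisfying the merged strategy satisfies both component strategies, combine the cost bounds, and use item~3 of Definition~\ref{divide} together with the semantics of implication to conclude $w_n\Vdash\psi/\gamma^n$. No gaps; nothing further to add.
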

\begin{proof}
By item 4 of Definition~\ref{sat}, the assumption $w\Vdash [C]_x(\phi\to\psi)$ implies that there is a strategy $s_1$ of coalition $C$ such that for any play $w_0,\delta_0,u_0,w_1,\dots,u_{n-1},w_n\in Play$ that satisfies strategy $s_1$, if $w=w_0$, then 
    \begin{enumerate}
        \item $\sum_{i = 0}^{n-1}u_i\gamma^i\le_C x$ and
        \item if $w_n\neq t$, then $w_n\Vdash (\phi\to\psi)/\gamma^n$.
    \end{enumerate}
Similarly, the assumption $w\Vdash [D]_y\phi$ implies that there is a strategy $s_2$ of coalition $C$ such that for any play $w_0,\delta_0,u_0,w_1,\dots,u_{n-1},w_n\in Play$ that satisfies strategy $s_2$, if $w=w_0$, then 
    \begin{enumerate}
        \item[3.] $\sum_{i = 0}^{n-1}u_i\gamma^i\le_D y$ and
        \item[4.] if $w_n\neq t$, then $w_n\Vdash \phi/\gamma^n$.
    \end{enumerate}
Consider strategy $s$ of coalition $C\cup D$ such that 
$$
s(a,\lambda)=
\begin{cases}
s_1(a,\lambda), &\mbox{ if } a\in C,\\
s_2(a,\lambda), &\mbox{ if } a\in D,
\end{cases}
$$
for any play $\lambda\in Play$. Note that strategy $s$ is well-defined because sets $C$ and $D$ are disjoint by the assumption of the lemma. 

Consider any play $w_0,\delta_0,u_0,w_1,\dots,u_{n-1},w_n\in Play$ satisfying strategy $s$. Thus, by Definition~\ref{play satisfies strategy}, this play satisfies strategies $s_1$ and $s_2$. Note that conditions 1 and 3 above imply that $\sum_{i = 0}^{n-1}u_i\gamma^i\le_{C\cup D} x\cup y$. Suppose that $w_n\neq t$. By item 4 of Definition~\ref{sat}, it suffices to show that $w_n\Vdash\psi/\gamma^n$. Indeed,  condition 2  above implies that $w_n\Vdash (\phi\to\psi)/\gamma^n$. Thus, $w_n\Vdash \phi/\gamma^n\to\psi/\gamma^n$ by Definition~\ref{divide}. Therefore, $w_n\Vdash\psi/\gamma^n$ by item 3 of Definition~\ref{sat} and condition 4 above.
\end{proof}

The next auxiliary lemma follows from Definition~\ref{divide}.
\begin{lemma}\label{divide by product}
$\phi/(\gamma\gamma') = (\phi/\gamma)/\gamma'$. \qed
\end{lemma}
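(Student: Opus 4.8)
The plan is to prove the identity by structural induction on the formula $\phi$, appealing to Definition~\ref{divide} at every step. The only arithmetic fact I will need is the elementary identity $r/(\gamma\gamma') = (r/\gamma)/\gamma'$ for real numbers, together with its pointwise version for the constraint functions, namely $x/(\gamma\gamma') =_C (x/\gamma)/\gamma'$, where scalar division of a constraint function is understood coordinatewise.

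First I would handle the base case $\phi \equiv p$. By item~1 of Definition~\ref{divide} we have $p/(\gamma\gamma') \equiv p$ and also $(p/\gamma)/\gamma' \equiv p/\gamma' \equiv p$, so the two sides coincide. Next I would treat the two propositional connectives. For negation, item~2 gives $(\neg\phi)/(\gamma\gamma') \equiv \neg(\phi/(\gamma\gamma'))$, which by the induction hypothesis equals $\neg((\phi/\gamma)/\gamma')$, and applying item~2 twice in the other direction rewrites this as $((\neg\phi)/\gamma)/\gamma'$. The implication case is identical in form, using item~3 and applying the induction hypothesis to both $\phi$ and $\psi$.

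The modality case is the only one with any content, and even there it is routine. For $\phi \equiv [C]_x\psi$, item~4 of Definition~\ref{divide} gives $([C]_x\psi)/(\gamma\gamma') \equiv [C]_{x/(\gamma\gamma')}(\psi/(\gamma\gamma'))$. Here I would invoke the real-number identity pointwise to rewrite the subscript as $x/(\gamma\gamma') =_C (x/\gamma)/\gamma'$, and the induction hypothesis to rewrite $\psi/(\gamma\gamma') \equiv (\psi/\gamma)/\gamma'$; folding with item~4 twice then yields $([C]_{x/\gamma}(\psi/\gamma))/\gamma' \equiv (([C]_x\psi)/\gamma)/\gamma'$, as required.

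I do not anticipate a genuine obstacle: the whole argument is a mechanical unfolding of the recursive clauses, and the single nontrivial ingredient is the associativity of scalar division, which is immediate for reals and transfers coordinatewise to the constraint functions. This is exactly why the statement can be asserted to ``follow from Definition~\ref{divide}'' with only a $\qed$.
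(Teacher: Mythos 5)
Your proof is correct and is exactly the argument the paper leaves implicit: the paper simply asserts the lemma ``follows from Definition~\ref{divide}'' and marks it with a \qed, and your routine structural induction (with the pointwise associativity of scalar division handling the subscript in the $[C]_x\psi$ case) is the fully spelled-out version of that claim.
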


\begin{lemma}
If $w\Vdash [C]_x\phi$, then $w\Vdash [C]_x[C]_x\phi$.
\end{lemma}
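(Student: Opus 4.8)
The plan is to use the very same strategy $s$ that witnesses $w\Vdash[C]_x\phi$ as the witness for the outer modality in $[C]_x[C]_x\phi$. By item~4 of Definition~\ref{sat}, the hypothesis yields a strategy $s$ of $C$ such that every play satisfying $s$ and starting at $w$ has discounted cost $\le_C x$ and lands, if non-terminal, in a state forcing $\phi/\gamma^n$. Taking $s'=s$, condition 4(a) of Definition~\ref{sat} for $[C]_x[C]_x\phi$ is then immediate, as it is literally the same cost bound. The whole content of the proof therefore lies in condition 4(b): for any play $w_0,\delta_0,u_0,\dots,u_{n-1},w_n$ satisfying $s$ with $w_0=w$ and $w_n\neq t$, I must show $w_n\Vdash([C]_x\phi)/\gamma^n$, which by item~4 of Definition~\ref{divide} equals $w_n\Vdash[C]_{x/\gamma^n}(\phi/\gamma^n)$.

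To establish this I would exhibit a witnessing strategy at $w_n$ by ``restarting'' $s$ from the history already played. Writing $\lambda=w_0,\delta_0,u_0,\dots,u_{n-1},w_n$ for that history, I define a strategy $\hat s$ of $C$ on plays $\mu$ issuing from $w_n$ by $\hat s(a,\mu)=s(a,\lambda\bullet\mu)$, where $\lambda\bullet\mu$ is the concatenation obtained by identifying the final state $w_n$ of $\lambda$ with the initial state of $\mu$. The first routine check is that if $\mu$ satisfies $\hat s$ then $\lambda\bullet\mu$ satisfies $s$ and starts at $w_0=w$; this is a direct unwinding of Definition~\ref{play satisfies strategy} together with the fact that $\lambda$ itself satisfies $s$.

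The crux is the cost bookkeeping. For a play $\mu=w_n,\delta'_0,u'_0,\dots,u'_{m-1},w'_m$ satisfying $\hat s$, the discounted cost of $\lambda\bullet\mu$ splits as $z+\gamma^n\sum_{j=0}^{m-1}u'_j\gamma^j$, where $z=\sum_{i=0}^{n-1}u_i\gamma^i$ is the cost already spent along $\lambda$. The cost bound for $s$ gives $z+\gamma^n\sum_{j=0}^{m-1}u'_j\gamma^j\le_C x$, hence $\sum_{j=0}^{m-1}u'_j\gamma^j\le_C(x-z)/\gamma^n\le_C x/\gamma^n$, where the last inequality uses $z\ge_C 0$ because all costs are non-negative. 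This is exactly where measuring everything in today's money matters: the remaining budget is $x-z$ rather than something inflated by $\gamma^{-n}$. For condition 4(b) of the inner modality, the forcing property of $s$ applied to $\lambda\bullet\mu$ gives $w'_m\Vdash\phi/\gamma^{n+m}$ when $w'_m\neq t$, and Lemma~\ref{divide by product}, iterated, rewrites $\phi/\gamma^{n+m}$ as $(\phi/\gamma^n)/\gamma^m$, which is precisely what forcing $[C]_{x/\gamma^n}(\phi/\gamma^n)$ demands. Thus $\hat s$ witnesses $w_n\Vdash([C]_x\phi)/\gamma^n$, and $s'=s$ witnesses $w\Vdash[C]_x[C]_x\phi$.

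I expect the main obstacle to be purely the formal setup of the concatenation $\lambda\bullet\mu$ and the index shift in the discount factors: confirming that the combined play genuinely satisfies $s$, and that its ``tail'' discounted cost carries exactly the factor $\gamma^n$. Once the geometric split of the cost is written down correctly, the non-negativity of $z$ delivers the bound $x/\gamma^n$ with room to spare, and Lemma~\ref{divide by product} handles the formula side mechanically.
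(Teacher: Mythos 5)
Your proposal is correct and takes essentially the same route as the paper's proof: reuse $s$ as the witness for the outer modality, define the inner witness by concatenating the already-elapsed history onto future plays, split the discounted cost as $z+\gamma^n\sum_j u'_j\gamma^j$ and discard $z\ge_C 0$ by non-negativity of costs, and finish with Lemma~\ref{divide by product} and item~4 of Definition~\ref{divide}. The only (immaterial) differences are that the paper's strategy $s'$ adds an explicit $\epsilon$ fallback for plays not starting at the right state, and your target formula $(\phi/\gamma^n)/\gamma^m$ for the inner condition 4(b) is in fact the cleaner, literally correct one.
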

\begin{proof}
By item 4 of Definition~\ref{sat}, the $w\Vdash [C]_x\phi$ implies that there is a strategy $s$ of coalition $C$ such that for any play $w_0,\delta_0,u_0,w_1,\dots,u_{n-1},w_n\in Play$ that satisfies strategy $s$, if $w=w_0$, then 
\begin{equation}\label{dec7-a}
    \sum_{i = 0}^{n-1}u_i\gamma^i\le_C x
\end{equation}
and
\begin{equation}\label{dec7-b}
    \mbox{if $w_n\neq t$, then $w_n\Vdash \phi/\gamma^n$.}
\end{equation}
Consider any play $w'_0,\delta'_0,u'_0,w'_1,\dots,u'_{m-1},w'_m\in Play$ that satisfies strategy $s$ such that $w=w'_0$. By the same item 4 of Definition~\ref{sat}, it suffices to show that
\begin{equation}\label{dec7-c}
        \sum_{i = 0}^{m-1}u'_i\gamma^i\le_C x
\end{equation}
and
\begin{equation}\label{dec7-d}
        \mbox{if $w'_m\neq t$, then $w'_m\Vdash ([C]_x\phi)/\gamma^m$}.
\end{equation}
Note that statement~(\ref{dec7-c}) follows from assumption~(\ref{dec7-a}). Thus, it is enough to prove statement~(\ref{dec7-d}). Suppose $w_m\neq t$, then, by Definition~\ref{divide}, it suffices to show that
\begin{equation}\label{dec7-e}
    w_m\Vdash [C]_{x/\gamma^m}(\phi/\gamma^m).
\end{equation}
Consider strategy
\begin{eqnarray}
&&s'(a, (w_0,\delta_0,u_0,w_1,\dots,u_{n-1},w_n))=\label{dec7-f}\\
&&=\begin{cases}
s(a, (w'_0,\delta'_0,w'_1,\dots,u'_{m-1},\nonumber\\
\hspace{8mm} w'_m,\delta_0,u_0,\dots,u_{n-1},w_n)), &\mbox{if $w'_m=w_0$}\nonumber\\
\epsilon,&\mbox{otherwise}.\nonumber
\end{cases}
\end{eqnarray}
Consider any play $w''_0,\delta''_0,u''_0,w''_1,\dots,u''_{k-1},w''_k\in Play$ that satisfies strategy $s'$ such that $w_m=w''_0$. By the same item 4 of Definition~\ref{sat}, to prove statement~(\ref{dec7-e}) it suffices to show that 
$$
        \sum_{i = 0}^{k-1}u''_i\gamma^i\le_C x/\gamma^m
$$
and if $w''_k\neq t$, then $w''_k\Vdash ([C]_{x/\gamma^m}(\phi/\gamma^m))/\gamma^k$.
Both of these facts follow from the three claims below.

\begin{claim}\label{dec7-claimA}
The play $w'_0,\delta'_0,u'_0,w'_1,\dots,u'_{m-1},w'_m,\delta''_0,u''_0$,\\ $w''_1, \dots$, $u''_{k-1},w''_k$ satisfies strategy $s$.  
\end{claim}
\begin{proof-of-claim}
The statement of the claim follows from Definition~\ref{play satisfies strategy}, equation~(\ref{dec7-f}), and the assumption that the play $w''_0,\delta''_0,u''_0,w''_1,\dots,u''_{k-1},w''_k$ satisfies strategy $s'$.
\end{proof-of-claim}

\begin{claim}\label{dec7-claimB}
$\sum_{i = 0}^{k-1}u''_i\gamma^i\le_C x/\gamma^m$.
\end{claim}
\begin{proof-of-claim}
By statement~(\ref{dec7-a}) and Claim~\ref{dec7-claimA},
$$
\sum_{i = 0}^{m-1}u'_i\gamma^i + \sum_{i=m}^{m+k-1}u''_{i-m}\gamma^i\le_C x.
$$
Functions $u'_0,\dots,u'_{m-1}$ are non-negative by item 5 of Definition~\ref{game}. Hence,
$
\sum_{i=m}^{m+k-1}u''_{i-m}\gamma^i\le_C x.
$
Thus,
$
\gamma^m\sum_{i=0}^{k}u''_{i}\gamma^i\le_C x.
$
Then, $\sum_{i = 0}^{k-1}u''_i\gamma^i\le_C x/\gamma^m$.
\end{proof-of-claim}

\begin{claim}\label{dec7-claimC}
If $w''_k\neq t$, then $w''_k\Vdash ([C]_{x/\gamma^m}(\phi/\gamma^m))/\gamma^k$.
\end{claim}
\begin{proof-of-claim}
By Claim~\ref{dec7-claimA}, the play $w'_0,\delta'_0,u'_0,w'_1,\dots,$ $u'_{m-1}, w'_m,\delta''_0,u''_0$, $w''_1, \dots$, $u''_{k-1},w''_k$ satisfies strategy $s$. Then, $w''_k\Vdash([C]_x\phi)/\gamma^{m+k}$ by equation~(\ref{dec7-b}) and the assumption $w''_k\neq t$. 
Hence $w''_k\Vdash (([C]_{x}\phi)/\gamma^m)/\gamma^k$
by Lemma~\ref{divide by product}.
Therefore,  $w''_k\Vdash ([C]_{x/\gamma^m}(\phi/\gamma^m))/\gamma^k$
by item~4 of Definition~\ref{divide}.
\end{proof-of-claim}
This concludes the proof of the lemma.
\end{proof}

\section{Auxiliary Lemmas}

\renewcommand*{\thelemma}{\ref{superdistributivity lemma}}
\begin{lemma}
If $\phi_1,\dots,\phi_n\vdash\psi$ and sets $C_1$,\dots,$C_n$ are pairwise disjoint, then
$$[C_1]_{x_1}\phi_1,\dots,[C_n]_{x_n}\phi_n\vdash[C_1\cup\dots\cup C_n]_{x_1\cup\dots\cup x_n}\psi.$$
\end{lemma}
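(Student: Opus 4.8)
The plan is to proceed by induction on $n$, using a deduction theorem to convert the hypothesis $\phi_1,\dots,\phi_n\vdash\psi$ into a theorem that can be fed through the Necessitation rule and then peeled apart, one hypothesis at a time, with the Cooperation axiom. The first thing to establish is the deduction theorem for the relation $\vdash$: since $X\vdash\phi$ is defined as derivability from the theorems of the system together with $X$ using \emph{only} Modus Ponens, and all propositional tautologies in $\Phi$ are among the axioms, the standard Hilbert-style argument applies and gives $X,\phi\vdash\psi$ iff $X\vdash\phi\to\psi$. In particular, the hypothesis $\phi_1,\dots,\phi_n\vdash\psi$ yields $\phi_1,\dots,\phi_{n-1}\vdash\phi_n\to\psi$.

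For the base case $n=1$, from $\phi_1\vdash\psi$ the deduction theorem gives $\vdash\phi_1\to\psi$, and Necessitation with the empty coalition gives $\vdash[\varnothing]_\varnothing(\phi_1\to\psi)$. The instance of the Cooperation axiom with $C=\varnothing$ and $D=C_1$ reads $[\varnothing]_\varnothing(\phi_1\to\psi)\to([C_1]_{x_1}\phi_1\to[C_1]_{x_1}\psi)$, using $\varnothing\cup C_1=C_1$ and $\varnothing\cup x_1=x_1$; two applications of Modus Ponens, the second using the assumption $[C_1]_{x_1}\phi_1$, deliver $[C_1]_{x_1}\psi$.

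For the inductive step, I would apply the induction hypothesis to $\phi_1,\dots,\phi_{n-1}\vdash\phi_n\to\psi$ to obtain
$$[C_1]_{x_1}\phi_1,\dots,[C_{n-1}]_{x_{n-1}}\phi_{n-1}\vdash[C_1\cup\dots\cup C_{n-1}]_{x_1\cup\dots\cup x_{n-1}}(\phi_n\to\psi),$$
and then invoke the Cooperation axiom with $C=C_1\cup\dots\cup C_{n-1}$ and $D=C_n$, $\phi=\phi_n$, and $\psi$ as given, to merge in the last hypothesis. Two further applications of Modus Ponens, the last using the assumption $[C_n]_{x_n}\phi_n$, then yield $[C_1\cup\dots\cup C_n]_{x_1\cup\dots\cup x_n}\psi$, which is the desired conclusion.

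The step I expect to be the main obstacle is verifying the disjointness side condition of the Cooperation axiom at the merging step: it requires $(C_1\cup\dots\cup C_{n-1})\cap C_n=\varnothing$, and this is precisely where the pairwise-disjointness assumption on $C_1,\dots,C_n$ is essential, since disjointness of consecutive coalitions alone would not suffice. One also has to confirm the routine bookkeeping that $x_1\cup\dots\cup x_n$ is the well-defined union of functions with pairwise disjoint domains, so that the constraint produced by the Cooperation axiom matches the constraint in the statement.
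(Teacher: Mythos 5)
Your proof is correct and takes essentially the same route as the paper: both rest on the deduction theorem, Necessitation applied to the empty coalition, and repeated use of the Cooperation axiom with Modus Ponens, with the pairwise-disjointness hypothesis guaranteeing the side condition at each merge. The only difference is organizational---you package the iteration as a formal induction on $n$ that folds in $[C_n]_{x_n}\phi_n$ last, while the paper necessitates the fully curried implication once and folds in $[C_1]_{x_1}\phi_1,\dots,[C_n]_{x_n}\phi_n$ from the front, ``repeating the last two steps $n-2$ times''; unrolled, the two derivations coincide.
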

\begin{proof}
Apply the deduction lemma $n$ times to the assumption $\phi_1,\dots,\phi_n\vdash\psi$. Then, $\vdash\phi_1\to(\dots\to(\phi_n\to\psi))$. Thus, $$\vdash[\varnothing]_0(\phi_1\to(\dots\to(\phi_n\to\psi))),$$ by the Necessitation inference rule. Hence, $$\vdash[C_1]_{x_1}\phi_1\to[C_1]_{x_1}(\phi_2\dots\to(\phi_n\to\psi))$$ by the Cooperation axiom and the Modus Ponens inference rule. Then, $$[C_1]_{x_1}\phi_1\vdash[C_1]_{x_1}(\phi_2\dots\to(\phi_n\to\psi))$$ by the Modus Ponens inference rule. Thus, again by the Cooperation axiom and Modus Ponens, $$[C_1]_{x_1}\phi_1\vdash[C_2]_{x_2}\phi_2\to 
[C_1 \cup C_2]_{x_1 \cup x_2}(\phi_3\dots\to(\phi_n\to\psi)).$$
Therefore, by repeating the last two steps $n-2$ times, $[C_1]_{x_1}\phi_1,\dots,[C_n]_{x_n}\phi_n\vdash[C_1 \cup\dots\cup C_n]_{x_1 \cup \dots \cup x_n}\psi$.
\end{proof}

\renewcommand*{\thelemma}{\ref{multiplication lemma}}
\begin{lemma}
If $\phi_1/\gamma,\dots,\phi_n/\gamma\vdash\psi/\gamma$, then
$\phi_1,\dots,\phi_n\vdash\psi$.
\end{lemma}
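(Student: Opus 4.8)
The plan is to show that division by a positive constant is a symmetry of the proof system, and then to run the given derivation through this symmetry in the reverse direction.

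First I would establish the auxiliary closure fact that for every real $\mu>0$ and every formula $\chi$, if $\vdash\chi$ then $\vdash\chi/\mu$. This is proved by induction on the derivation of $\chi$ from the axioms using Modus Ponens and Necessitation. For the base case I would verify that each axiom instance, after division by $\mu$, is again an instance of the same axiom. This rests on three features of Definition~\ref{divide}: division commutes with implication (item~3), so a propositional tautology divided by $\mu$ is again a substitution instance of the same propositional shape (treating each maximal modal or atomic subformula as an atom) and the Reflexivity, Cooperation, Monotonicity, and Transitivity shapes are preserved; division acts on a modality by $([C]_x\alpha)/\mu\equiv[C]_{x/\mu}(\alpha/\mu)$ (item~4); and division respects the side conditions, namely $x\le_C y$ implies $x/\mu\le_C y/\mu$ because $\mu>0$ (for Monotonicity), the disjointness $C\cap D=\varnothing$ is untouched, and $(x\cup y)/\mu$ equals $(x/\mu)\cup(y/\mu)$ (for Cooperation). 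For the inductive step, Modus Ponens is preserved because $(\alpha\to\beta)/\mu\equiv(\alpha/\mu)\to(\beta/\mu)$, and Necessitation is preserved because $([C]_x\alpha)/\mu\equiv[C]_{x/\mu}(\alpha/\mu)$, so applying the rule to $\alpha/\mu$, which is a theorem by the induction hypothesis, yields precisely the divided conclusion.

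Next, I would set $\nu=1/\gamma$, which is positive since $\gamma\in(0,1)$. By hypothesis there is a derivation of $\psi/\gamma$ from the assumptions $\phi_1/\gamma,\dots,\phi_n/\gamma$ using only theorems of the system and the Modus Ponens inference rule. I would transform this derivation by dividing every formula occurring in it by $\nu$. Each theorem used remains a theorem by the closure fact above; each application of Modus Ponens remains a valid application by item~3 of Definition~\ref{divide}; and each assumption $\phi_i/\gamma$ becomes $(\phi_i/\gamma)/\nu$, which equals $\phi_i/(\gamma\nu)=\phi_i/1=\phi_i$ by Lemma~\ref{divide by product} and Definition~\ref{divide}. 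Likewise the endpoint $\psi/\gamma$ becomes $(\psi/\gamma)/\nu=\psi$. The transformed sequence is therefore a derivation of $\psi$ from $\phi_1,\dots,\phi_n$ using only theorems and Modus Ponens, which is exactly the desired relation $\phi_1,\dots,\phi_n\vdash\psi$.

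The main obstacle is the base case of the closure fact: one must check, axiom by axiom, that division by $\mu$ keeps each schema within the same schema, and in particular that the arithmetic side conditions (the inequality $x\le_C y$ for Monotonicity and the splitting of the disjoint union $x\cup y$ for Cooperation) survive division by a positive number. Everything after that is bookkeeping that rides on the homomorphism properties of the operation $\phi\mapsto\phi/\mu$ recorded in Definition~\ref{divide} together with Lemma~\ref{divide by product}.
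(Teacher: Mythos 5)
Your proposal is correct and follows essentially the same route as the paper: the paper's proof likewise observes that dividing every formula in a derivation by a positive real $\mu$ yields another derivation, and then applies this with $\mu=\gamma^{-1}$ to undo the division by $\gamma$. The only difference is that you spell out the axiom-by-axiom and rule-by-rule verification that the paper leaves as a one-line assertion.
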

\begin{proof}
Note that if a sequence of formulae $\chi_1,\dots,\chi_n$ is a derivation in our logical system, then for each real number $\mu>0$, sequence $\chi_1/\mu,\dots,\chi_n/\mu$ is also a derivation. Hence, for any formulae $\phi_1,\dots,\phi_n,\psi\in \Phi$, if $\phi_1,\dots,\phi_n\vdash\psi$, then $\phi_1/\mu,\dots,\phi_n/\mu\vdash\psi/\mu$. Let $\mu=\gamma^{-1}$. Thus, for any formulae $\phi_1,\dots,\phi_n,\psi\in \Phi$, if $\phi_1/\gamma,\dots,\phi_n/\gamma\vdash\psi/\gamma$, then $\phi_1,\dots,\phi_n\vdash\psi$.
\end{proof}

\renewcommand*{\thelemma}{\ref{supermonotonicity lemma}}
\begin{lemma}
$\vdash[C]_x\phi\to [D]_y\phi$, where $C\subseteq D$ and $x\le_C y$.
\end{lemma}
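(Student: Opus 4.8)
The plan is to reduce the statement to a single application of the Cooperation axiom, after first raising the budget on $C$ with Monotonicity and then padding the extra agents of $D\setminus C$ with a trivially maintainable formula. Write $E=D\setminus C$, so that $C$ and $E$ are disjoint and $C\cup E=D$, and let $y_C$ and $y_E$ be the restrictions of the constraint $y$ to $C$ and to $E$, respectively. Since these two domains are disjoint, $y=y_C\cup y_E$.

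First I would boost the budget on $C$ from $x$ to $y_C$. Because $x\le_C y$ by hypothesis and $y_C$ is just $y$ restricted to $C$, we have $x\le_C y_C$, so the Monotonicity axiom together with Modus Ponens gives $[C]_x\phi\vdash[C]_{y_C}\phi$. Next I would convert this into the implicational form that Cooperation expects. As $\phi\to(\top\to\phi)$ is a propositional tautology, we have $\phi\vdash\top\to\phi$, so the $n=1$ instance of Lemma~\ref{superdistributivity lemma} yields $[C]_{y_C}\phi\vdash[C]_{y_C}(\top\to\phi)$. Meanwhile the newcomers contribute nothing: from the theorem $\top$ the Necessitation rule gives $\vdash[E]_{y_E}\top$.

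Now the Cooperation axiom applies, since $C\cap E=\varnothing$, and gives $\vdash[C]_{y_C}(\top\to\phi)\to([E]_{y_E}\top\to[C\cup E]_{y_C\cup y_E}\phi)$, whose consequent is exactly $[D]_y\phi$ because $C\cup E=D$ and $y_C\cup y_E=y$. Two applications of Modus Ponens collapse the chain into $[C]_x\phi\vdash[D]_y\phi$, and the deduction lemma then delivers $\vdash[C]_x\phi\to[D]_y\phi$.

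The only genuinely non-mechanical step is seeing how to enlarge the coalition: Cooperation only ever fuses two disjoint coalitions, so to pass from $C$ to its superset $D$ one must supply a witness ability for the newcomers $E=D\setminus C$, and the correct witness is the vacuous $[E]_{y_E}\top$ produced by Necessitation. Everything else is budget bookkeeping, namely checking $y_C\cup y_E=y$ and that the Monotonicity inequality $x\le_C y_C$ follows from $x\le_C y$. The degenerate case $C=D$ (so $E=\varnothing$) is absorbed by the same argument, with the empty coalition maintaining $\top$ at the empty budget.
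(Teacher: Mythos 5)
Your proof is correct and uses essentially the same approach as the paper: enlarge the coalition via the Cooperation axiom, letting the newcomers $D\setminus C$ maintain a triviality obtained by Necessitation, and repair the budget with Monotonicity. The only cosmetic difference is the role assignment inside Cooperation --- the paper has $D\setminus C$ maintain the implication $\phi\to\phi$ so that $[C]_x\phi$ serves directly as the antecedent premise (making your extra conversion of $[C]_{y_C}\phi$ into $[C]_{y_C}(\top\to\phi)$ via Lemma~\ref{superdistributivity lemma} unnecessary), and it applies Monotonicity at the end ($[D]_{y'\cup x}\phi\to[D]_y\phi$) rather than at the start.
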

\begin{proof}
Let $y'$ be the restriction of function $y$ to set $D\setminus C$. Then, by the Cooperation axiom and the assumption $C\subseteq D$,
$$\vdash [D\setminus C]_{y'}(\phi\to\phi)\to([C]_x\phi\to [D]_{y'\cup x}\phi).$$
Note that $\phi\to\phi$ is a propositional tautology. Hence, by the Necessitation inference rule $\vdash [D\setminus C]_{y'}(\phi\to\phi)$. Then, $\vdash [C]_x\phi\to [D]_{y'\cup x}\phi$ by the Modus Ponens inference rule. Note that $\vdash [D]_{y'\cup x}\phi\to [D]_{y}\phi$ by the Monotonicity axiom and the assumption $x\le_C y$ of the lemma. Therefore, $\vdash [C]_x\phi\to [D]_{y}\phi$ by propositional reasoning.
\end{proof}

\section{Completeness}

\renewcommand*{\thelemma}{\ref{induction lemma}}
\begin{lemma}
$w\Vdash\phi$ iff $\phi\in w$ for each state $w\in W$ and each formula $\phi\in\Phi$.
\end{lemma}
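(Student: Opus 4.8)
The plan is to prove the Induction (Truth) Lemma by structural induction on the formula $\phi$, treating the state $w\in W$ as fixed but arbitrary. The Boolean base and inductive cases are routine: for a propositional variable $p$, the equivalence $w\Vdash p \iff p\in w$ is immediate from Definition~\ref{canonical pi} together with item~1 of Definition~\ref{sat}. For negation and implication, I would invoke the induction hypothesis on the immediate subformulae and use the maximality and consistency of $w$ (every maximal consistent set is deductively closed and decides every formula) to match the semantic clauses in items~2 and~3 of Definition~\ref{sat} against the propositional behavior of the set $w$.

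The substantive case is $\phi\equiv[C]_x\psi$, and here the two directions are supplied, respectively, by Lemma~\ref{child all} and Lemma~\ref{child exists}. For the forward-looking direction, suppose $[C]_x\psi\in w$. By Lemma~\ref{child all} there is a strategy $s$ of coalition $C$ such that every play starting at $w$ and satisfying $s$ obeys the discounted budget bound $\sum_{i=0}^{n-1}u_i\gamma^i\le_C x$ and, whenever $w_n\neq t$, satisfies $\psi/\gamma^n\in w_n$. Applying the induction hypothesis to the formula $\psi/\gamma^n$ at state $w_n$ converts the membership $\psi/\gamma^n\in w_n$ into the semantic statement $w_n\Vdash\psi/\gamma^n$; this is legitimate because $\psi/\gamma^n$ has the same structural complexity as $\psi$ (dividing by a scalar does not change the shape of the formula, per Definition~\ref{divide}), so it is covered by the inductive hypothesis. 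Matching these two facts against the clauses 4(a) and 4(b) of Definition~\ref{sat} yields $w\Vdash[C]_x\psi$.

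For the converse, I would argue contrapositively: assume $[C]_x\psi\notin w$ and show $w\nVdash[C]_x\psi$. By item~4 of Definition~\ref{sat}, establishing $w\nVdash[C]_x\psi$ requires refuting \emph{every} candidate strategy $s$ of $C$. Given such an $s$, restrict it to the first move by setting $\alpha(a)=s(a,w)$ for $a\in C$; then Lemma~\ref{child exists} furnishes a complete action profile $\delta$ extending $\alpha$, a cost function $u$, and a successor $w'$ with $(w,\delta,u,w')\in M$ and with either (i) $u\not\le_C x$ or (ii) $w'\neq t$ and $\psi/\gamma\notin w'$. The single-step play $w,\delta,u,w'$ satisfies $s$ by construction. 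In case (i) the budget clause 4(a) fails already at the first step (since $\sum_{i=0}^{0}u_i\gamma^i=u\not\le_C x$), so $s$ fails to witness $[C]_x\psi$. In case (ii), the induction hypothesis applied to $\psi/\gamma$ at $w'$ gives $w'\nVdash\psi/\gamma$, so clause 4(b) fails at $n=1$. Either way no strategy witnesses the formula, hence $w\nVdash[C]_x\psi$.

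The main obstacle I anticipate is the bookkeeping around the discount factor in the modal case: one must be careful that the induction hypothesis is invoked only on formulae of strictly smaller structural complexity, which is why it is essential that $\psi/\gamma^n$ and $\psi/\gamma$ are structurally no larger than $\psi$ (Definition~\ref{divide} preserves the parse tree and only rescales the numeric constraints). A secondary subtlety is the quantifier structure of clause~4: the forward direction produces a \emph{single} witnessing strategy, whereas the converse must defeat \emph{all} strategies, so the converse argument must begin by fixing an arbitrary $s$ and only then appeal to Lemma~\ref{child exists} on its first-move action profile. Once these two points are handled cleanly, the remaining steps are direct appeals to the two auxiliary lemmas and the semantic clauses.
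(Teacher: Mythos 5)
Your proposal is correct and follows essentially the same route as the paper's own proof: structural induction with the Boolean cases handled via maximality/consistency of $w$, Lemma~\ref{child all} for the membership-to-satisfaction direction, and Lemma~\ref{child exists} applied to the first-move action profile of an arbitrary strategy (yielding a one-step play) for the contrapositive direction. Your explicit observation that $\psi/\gamma^n$ has the same structural complexity as $\psi$, so the induction hypothesis legitimately applies to it, is a point the paper leaves implicit, and it is exactly the right justification.
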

\begin{proof}
We prove the statement by induction on the structural complexity of formula $\phi$. If $\phi$ is a propositional variable, then the required follows from item~1 of Definition~\ref{sat} and Definition~\ref{canonical pi}. If formula $\phi$ is a negation or an implication, then the statement of the lemma follows from items~2 and 3 of Definition~\ref{sat}, the induction hypothesis, and the maximality and consistency of set $w$ in the standard way.

Suppose that formula $\phi$ has the form $[C]_x\psi$.

\noindent$(\Rightarrow):$ Assume that $[C]_x\psi\notin w$.  Consider any strategy $s$ of coalition $C$. Define action profile $\alpha$ of coalition $C$ such that, for each agent $a\in C$,
\begin{equation}\label{oct19-alpha}
    \alpha(a)=s(a,w_0).
\end{equation}
By  Lemma~\ref{child exists}, there is a complete action profile $\delta$, a cost function $u\in [0,+\infty)^\mathcal{A}$, and a state $w'\in W^+$ such that $\alpha=_C\delta$, $(w,\delta,u,w')\in M$, and 
\begin{equation}\label{oct19-either}
    \mbox{either (i) $u\not\le_C x$ or (ii) $w'\neq t$ and $\psi/\gamma\notin w'$}.
\end{equation}
Consider play $w,\delta,w'$. This play satisfies strategy $s$ by Definition~\ref{play satisfies strategy}, assumption $\alpha=_C\delta$, and equation~(\ref{oct19-alpha}). Then, by item~4 of Definition~\ref{sat}, to prove $w\nVdash [C]_x\psi$, it suffices to show that either (i) $u\not\le_C x$ or (ii) $w'\neq t$ and $w'\nVdash\psi/\gamma$. Note that this statement is true by statement~(\ref{oct19-either}) and the induction hypothesis.

\noindent$(\Leftarrow):$ Suppose that $[C]_x\phi\in w$. By Lemma~\ref{child all},  there is a strategy $s$ of coalition $C$ such that, 
for each play $w_0,\delta_0$, $u_0,w_1,\dots,u_{n-1},w_n$ satisfying strategy $s$, if $w=w_0$, then
\begin{enumerate}
    \item $\sum_{i=0}^{n-1} u_i\gamma^i\le_C x$ and
    \item if $w_n \neq t$, then $\phi/\gamma^n \in w_n$.
\end{enumerate}
Therefore, $w\Vdash[C]_x\phi$ by item 4 of Definition~\ref{sat} and the induction hypothesis.
\end{proof}

\end{document}